\renewcommand\footnotetextcopyrightpermission[1]{} % removes footnote with conference information in first column
  \providecommand\BibTeX{{%
    \normalfont B\kern-0.5em{\scshape i\kern-0.25em b}\kern-0.8em\TeX}}}
    \newcolumntype{Y}{>{\centering\arraybackslash}X}
\newcommand{\ignore}[1]{}
\DeclareDocumentCommand{\set}{m g o}{
    \ensuremath{
        \IfNoValueTF{#3}{\left}{#3}\{#1
            \IfNoValueTF{#2}{}{
                \ \IfNoValueTF{#3}{\left}{#3}\vert\ \vphantom{#1}#2\IfNoValueTF{#3}{\right.}{}
            } \IfNoValueTF{#3}{\right}{#3}\}
    }\xspace
}
    \def\\{}
\definecolor{lightgreen}{rgb}{0.75,0.92,0.61}
\newcommand\numberthis{\addtocounter{equation}{1}\tag{\theequation}}
\newcommand{\labelname}[1]{
  \def\@currentlabelname{#1}}
\providecommand{\ignore}[1]{} 
\newcommand{\ooea}{(1+1) EA\xspace}
\newcommand{\muoea}{($\mu$+1) EA\xspace}
\DeclareMathOperator{\erfc}{erfc}
\newcommand*{\LO}{\mbox{\textsc{LeadingOnes}}\xspace}
\newcommand{\realnum}{\mathbb{R}}
\newcommand{\natnum}{\mathbb{N}}
\newcommand{\bitstring}{\{0,1\}^n}
\newtheorem{thm}{Theorem}
\newtheorem{lem}[thm]{Lemma}
\newtheorem{cor}[thm]{Corollary}
\newenvironment{myAlgorithm}%
	{
	%\vspace*{1em}
	\begin{center}
	\begin{algorithm2e}
	%\setstretch{1.33}
	}%
\begin{document}

%%
%% The "title" command has an optional parameter,
%% allowing the author to define a "short title" to be used in page headers.
\title{Analysis of the (1+1) EA on LeadingOnes with Constraints}

\author{Tobias Friedrich}
\email{friedrich@hpi.de}
% \orcid{1234-5678-9012}
\affiliation{%
  \institution{Hasso Plattner Institute\\ University of Potsdam}
  \city{Potsdam}
  \country{Germany}
}

\author{Timo Kötzing}
\email{timo.koetzing@hpi.de}
% \orcid{1234-5678-9012}
\affiliation{%
  \institution{Hasso Plattner Institute\\ University of Potsdam}
  \city{Potsdam}
  \country{Germany}
}

\author{Aneta Neumann}
\email{aneta.neumann@adelaide.edu.au}
%\orcid{https://orcid.org/0000-0002-0734-0708}
\affiliation{%
  \institution{Optimisation and Logistics\\ The University of Adelaide}
  \city{Adelaide}
  \country{Australia}
}

\author{Frank Neumann}
\email{frank.neumann@adelaide.edu.au}
%\orcid{https://orcid.org/0000-0002-0734-0708}
\affiliation{%
  \institution{Optimisation and Logistics\\ The University of Adelaide}
  \city{Adelaide}
  \country{Australia}
}

\author{Aishwarya Radhakrishnan}
\email{aishwarya.radhakrishnan@hpi.de}
% \orcid{1234-5678-9012}
\affiliation{%
  \institution{Hasso Plattner Institute\\ University of Potsdam}
  \city{Potsdam}
  \country{Germany}
}

\renewcommand{\shortauthors}{ Friedrich and Kötzing, et al.}
%%
%% The abstract is a short summary of the work to be presented in the
%% article.
\begin{abstract}
Understanding how evolutionary algorithms perform on constrained problems has gained increasing attention in recent years. In this paper, we study how evolutionary algorithms optimize constrained versions of the classical LeadingOnes problem. We first provide a run time analysis for the classical (1+1)~EA on the LeadingOnes problem with a deterministic cardinality constraint, giving $\Theta(n (n-B)\log(B) + n^2)$ as the tight bound. Our results show that the behaviour of the algorithm is highly dependent on the constraint bound of the uniform constraint. Afterwards, we consider the problem in the context of stochastic constraints and provide insights using experimental studies on how the ($\mu$+1)~EA is able to deal with these constraints in a sampling-based setting.
\end{abstract}

%%
%% The code below is generated by the tool at http://dl.acm.org/ccs.cfm.
%% Please copy and paste the code instead of the example below.
%%
\ignore{
\begin{CCSXML}
<ccs2012>
 <concept>
  <concept_id>10010520.10010553.10010562</concept_id>
  <concept_desc>Computer systems organization~Embedded systems</concept_desc>
  <concept_significance>500</concept_significance>
 </concept>
 <concept>
  <concept_id>10010520.10010575.10010755</concept_id>
  <concept_desc>Computer systems organization~Redundancy</concept_desc>
  <concept_significance>300</concept_significance>
 </concept>
 <concept>
  <concept_id>10010520.10010553.10010554</concept_id>
  <concept_desc>Computer systems organization~Robotics</concept_desc>
  <concept_significance>100</concept_significance>
 </concept>
 <concept>
  <concept_id>10003033.10003083.10003095</concept_id>
  <concept_desc>Networks~Network reliability</concept_desc>
  <concept_significance>100</concept_significance>
 </concept>
</ccs2012>
\end{CCSXML}

\ccsdesc[500]{Computer systems organization~Embedded systems}
\ccsdesc[300]{Computer systems organization~Redundancy}
\ccsdesc{Computer systems organization~Robotics}
\ccsdesc[100]{Networks~Network reliability}
}
%%
%% Keywords. The author(s) should pick words that accurately describe
%% the work being presented. Separate the keywords with commas.
\keywords{Evolutionary algorithms, chance constraint optimization, run time analysis, theory.}

%% A "teaser" image appears between the author and affiliation
%% information and the body of the document, and typically spans the
%% page.
% \begin{teaserfigure}
%   \includegraphics[width=\textwidth]{sampleteaser}
%   \caption{Seattle Mariners at Spring Training, 2010.}
%   \Description{Enjoying the baseball game from the third-base
%   seats. Ichiro Suzuki preparing to bat.}
%   \label{fig:teaser}
% \end{teaserfigure}

%%
%% This command processes the author and affiliation and title
%% information and builds the first part of the formatted document.
\maketitle

\section{Introduction}
Evolutionary algorithms~\cite{DBLP:series/ncs/EibenS15} have been used to tackle a wide range of combinatorial and complex engineering problems. Understanding evolutionary algorithms from a theoretical perspective is crucial to explain their success and give guidelines for their application.
 The area of run time analysis has been a major contributor to the theoretical understanding of evolutionary algorithms over the last 25 years~\cite{DBLP:series/ncs/Jansen13,DBLP:books/daglib/0025643,DBLP:series/ncs/2020DN}. 
 Classical benchmark problems such as OneMax and LeadingOnes have been analyzed 
% % to death by the run time community :-)
 in a very detailed way, showing deep insights into the working behaviour of evolutionary algorithms for these problems. In real-world settings, problems that are optimized usually come with a set of constraints which often limits the resources available. Studying classical benchmark problems even with an additional simple constraint such as a uniform constraint, which limits the number of elements that can be chosen in a given benchmark function, poses significant new technical challenges for providing run time bounds of even simple evolutionary algorithms such as the (1+1)~EA.
 
% \frank{Following part for static and dynamic constraints. no chance constraints here}

 OneMax and the broader class of linear functions~\cite{DBLP:journals/tcs/DrosteJW02} have played a key role in developing the area of run time analysis during the last 25 years, and run time bounds for linear functions with a uniform constraint have been obtained~\cite{DBLP:journals/tcs/FriedrichKLNS20,DBLP:journals/algorithmica/NeumannPW21}. It has been shown in \cite{DBLP:journals/tcs/FriedrichKLNS20} that the (1+1)~EA needs exponential time optimize OneMax under a specific linear constraint which points to the additional difficulty which such constraints impose on the search process.
 Tackling constraints by taking them as additional objectives has been shown to be quite successful for a wide range of problems. For example, the behaviour of evolutionary multi-objective algorithms has been analyzed for submodular optimization problems with various types of constraints~\cite{DBLP:conf/nips/QianYZ15,DBLP:conf/ijcai/QianSYT17}. Furthermore, the performance of evolutionary algorithms for problems with dynamic constraints has been investigated in \cite{DBLP:journals/tcs/RoostapourNN22,DBLP:journals/ai/RoostapourNNF22}.

%\frank{paragraph for background on chance constraints (static and dynamic)}
Another important area involving constraints is chance constrained optimization, which deals with stochastic components in the constraints. Here, the presence of stochastic components in the constraints makes it challenging to guarantee that the constraints are not violated at all. Chance-constrained optimization problems~\cite{charnes1959chance,miller1965chance} are an important class of the stochastic optimization problems~\cite{beyer2007robust} that optimize a given problem under the condition that a constraint is only violated with a small probability. 
%Chance-constrained optimization is the process of finding an optimal solution to a problem while ensuring that the probability of a constraint violation does not exceed a given  probability level~\cite{charnes1959chance,miller1965chance}. 
Such problems occur in a wide range of areas, including finance, logistics and engineering~\cite{li2008chance,zhang2011chance,nair2011fleet,hanasusanto2015distributionally}. 
Recent studies of evolutionary algorithms for chance-constrained problems focused on a classic knapsack problem where the uncertainty lies in the probabilistic constraints  \cite{DBLP:conf/gecco/XieHAN019,DBLP:conf/gecco/XieN020}. Here, the aim is to maximise the deterministic profit subject to a constraint which involves stochastic weights and where the knapsack capacity bound can only be violated with a small probability of at most $\alpha$. 
 A different stochastic version of the knapsack problem has been studied in~\cite{DBLP:conf/ppsn/NeumannXN22}. Here profits involve uncertainties and weights are deterministic. In that work, Chebyshev and Hoeffding-based fitness
functions have been introduced and evaluated. These fitness functions discount expected profit values based on uncertainties of the given solutions.
%{Here, the objective is to maximize the uncertainty discounted profit.
%Furthermore, the stockpile blending problem with chance constraints has been considered to address the uncertainty in the geological setting \cite{10.1145/3449639.3459382}. 
 %Furthermore, bi-objective formulations can help to deal with %dynamic chance-constrained problems where the constraint bound %changes~\cite{DBLP:conf/ecai/AssimiHXN020}.
 % by introducing a second objective to cater for dynamic changes to the constraint bound~. 

Theoretical investigations for problems with chance constraints have gained recent attention in the area of run time analysis. This includes studies for montone submodular problems~\cite{DBLP:conf/ppsn/NeumannN20} and special instances of makespan scheduling~\cite{DBLP:conf/ppsn/ShiYN22}.
%Evolutionary algorithms for chance constrained submodular optimization problems have been analyzed in the context of Pareto optimization. 
Furthermore, detailed run time analyses have been carried out for specific classes of instances for the chance constrained knapsack problem~\cite{DBLP:conf/foga/0001S19,DBLP:conf/gecco/XieN0S21}. 
\subsection{Our contribution}
%\frank{our contribution here. first rigorous analysis for LO and determinstic constraints, afterwards experimental investigation for chance constraints}
In this paper, we investigate the behaviour of the (1+1)~EA for the classical \LO problem with additional constraints. We first study the behaviour for the case of a uniform constraint which limits the number of $1$-bits that can be contained in any feasible solution. Let $B$ be the upper bound on the number of $1$-bits that any feasible solution can have. Then the optimal solutions consists of exactly $B$ leading $1$s and afterwards only $0$s. The search for the (1+1)~EA is complicated by the fact that when the current solution consists of $k<B$ leading $1$s, additional $1$-bits not contributing to the fitness score at positions $k+2, \ldots, n$ might make solutions infeasible. We provide a detailed analysis of such scenarios in dependence of the given bound $B$.

Specifically, we show a tight bound of $\Theta(n^2 + n (n-B) \log(B))$ (see Corollary~\ref{cor:RuntimeOnConstrainedLO}). 
Note that \cite{DBLP:journals/tcs/FriedrichKLNS20} shows the weaker bound of $O(n^2 \log(B))$, which, crucially, does not give insight into the actual optimization process at the constraint. Our analysis shows in some detail how the search progresses. In the following discussion, for the current search point of the algorithm, we call the part of the leading $1$s the \emph{head} of the bit string, the first $0$ the \emph{critical bit} and the remaining bits the \emph{tail}. While the size of the head is less than $B-(n-B)$, optimization proceeds much like for unconstrained LeadingOnes; this is because the bits in the tail of size about $2(n-B)$ are (almost) uniformly distributed, contributing roughly a number of $n-B$ many $1$s additionally to the $B-(n-B)$ many $1$s in the head. This stays in sum (mostly) below the cardinality bound $B$, occasional violations changing the uniform distribution of the tail to one where bits in the tail are $1$ with probability a little less than $1/2$ (see Lemma~\ref{lem:distributionInTail}).

Once the threshold of $B-(n-B)$ many $1$s in the head is passed, the algorithm frequently runs into the constraint. For a phase of equal LeadingOnes value, we consider the random walk of the number of $1$s of the bit string of the algorithm. This walk has a bias towards the bound $B$ (its maximal value), where the bias is light for LeadingOnes-values just a bit above $B-(n-B)$ and getting stronger as this value approaches $B$. Since progress is easy when not at the bound of $B$ many $1$s in the bit string (by flipping the critical bit and no other) and difficult otherwise (additionally to flipping the critical bit, a $1$ in the tail needs to flip), the exact proportion of time that the walk spends in states of less than $B$ versus exactly $B$ many $1$s is very important. In the final proofs, we estimate these factors and have corresponding potential functions reflecting gains (1) from changing into states of less than $B$ many $1$s and (2) gaining a leading~$1$. Bounding these gains appropriately lets us find asymptotically matching upper and lower bounds using the additive drift theorem~\cite{ad}.

In passing we note that two different modifications of the setting yield a better time of $O(n^2)$. First, this time is sufficient to achieve a LeadingOnes-values of $B-c(n-B)$ for any $c>0$ (see Corollary~\ref{cor:cor1}). Second, considering the number of $1$s as a secondary objective (to be minimized) gives an optimization time of $O(n^2)$ (see Theorem~\ref{thm:improved_algo}).

Afterwards, we turn to stochastic constraints and investigate an experimental setting that is motivated by recent studies in the area of chance constraints. We consider LeadingOnes with a stochastic knapsack chance constraint, where the weights of a linear constraint are chosen from a given distribution. In the first setting, the weight of each item is chosen independently according to a Normal distribution $N(\mu, \sigma^2)$. A random sample of weights is feasible if the sum of the chosen sampled weights does not exceed a given knapsack bound~$B$. In any iteration, all weights are resampled independently for all evaluated individuals. Our goal is to understand the maximal stable LeadingOnes value that the algorithm obtains. In the second setting which we study empirically, the weights are deterministically set to~$1$ and the bound is chosen uniformly at random within an interval $[B-\epsilon, B+\epsilon]$, where $\epsilon>0$ specifies the uncertainty around the constraint bound. For both settings, we examine the performance of the $(1+1)$~EA and $(10+1)$-EA for different values of $B$ and show that a larger parent population has a highly positive effect for these stochastic settings.

The paper is structured as follows. In Section~\ref{sec:prelim}, we introduce the problems and algorithms that we study in this paper. We present our run time analysis for the LeadingOnes problem with a deterministic uniform constraint in Section~\ref{sec3}. In section \ref{sec:sec4}, we discuss a way to obtain $\Theta(n^2)$ bound on the run time for the same problem and report on our empirically investigations for the stochastic settings in Section~\ref{sec:experiments}. Finally, we finish with some concluding remarks. Note that some proofs are ommitted due to space constraints.

% Evolutionary algorithms have been widely used to various stochastic optimization problems~\cite{poojari2008genetic,iwamura1996genetic}. Recent works investigated the knapsack problem in static~\cite{DBLP:conf/gecco/XieHAN019,DBLP:conf/gecco/XieN020}, mining application~\cite{DBLP:conf/gecco/XieN021}, and dynamic settings~\cite{DBLP:conf/ecai/AssimiHXN020}.
% %
%Recent theoretical studies have demonstrated that when dealing with submodular problems with chance constraints, evolutionary Pareto optimization approaches and greedy algorithms only deficit in terms of approximation quality in comparison to the corresponding deterministic problems~\cite{DBLP:conf/aaai/DoerrD0NS20,DBLP:conf/ppsn/NeumannN20}. A run time analysis of the chance-constrained knapsack problem has been provided in~\cite{DBLP:conf/foga/0001S19}.

\section{Preliminaries}
\label{sec:prelim}
In this section we define the objective function, constraints and the algorithms used in our analysis. With $|x|_1$ we denote the number of $1$s in a bit string $x \in \bitstring$.

\subsection{Cardinality Constraint}\label{cac}
Let $f: \bitstring \rightarrow \realnum$, $B \leq n $ and for $x \in \bitstring$, let $x_i$ denote the $i$ -th bit of $x$. In this paper, optimizing $f$ with cardinality constraint $B$ means finding,
$\max_{x \in \bitstring} f(x) \text{ s.t } \sum_{i = 1}^{n} x_i \leq B.$

\subsection{Stochastic Constraint}\label{sc}
% \frank{We don't have a parameter $\alpha$ is in chance constraints here. Should we just call it a stochastic constraint?}
% \Acomment{I am sorry, I am not sure. Should we call the below in section 2.3  as stochastic constraint?}
% \frank{They are both stochastic constrains, one with uncertainty in the weights the other has uncertainty in the bound. I would not call either of them chance constraint, but we can motivate that we investigate this setting, but recent work around chance constraints (see intro)}
Let $f: \bitstring \rightarrow \realnum$, $B \leq n $ and for $x \in \bitstring$, let $x_i$ denote the $i$ -th bit of $x$. In this paper we empirically analyse the following normal stochastic constraint with uncertainty in the weights optimization problem,
$$\max_{x \in \bitstring} f(x) \text{ s.t } \sum_{i = 1}^{n}w_i \cdot x_i \leq B \text{, where } w_i \sim N(\mu, \sigma^2).$$

Let $f: \bitstring \rightarrow \realnum$, $B \leq n $ and for $x \in \bitstring$, let $x_i$ denote the $i$ -th bit of $x$. In this paper we also empirically analyse the following uniform stochastic constraint  with uncertainty in the bound optimization problem, 
$$\max_{x \in \bitstring} f(x) \text{ s.t } |x|_1 \leq y \text{, where } y \sim U(B - \epsilon, B + \epsilon).$$

\subsection{Objective Function}
We consider the \LO function as our objective with cardinality and stochastic constraints for our analysis.

$\LO: \bitstring  \rightarrow \realnum$, is a function which maps a bit string of length $n$ to number of $1$s before the first $0$ in the bit string. For every $x \in \bitstring$,
$\LO(x) = \sum_{i = 1}^{n} \prod_{j = 1}^{i} x_j.$
%Violating a cardinality constraint gives a fitness of $-|x|_1$.

\subsection{(\texorpdfstring{$\mu$+1)}{Lg} EA}
The \muoea on a real valued fitness function $f$ with constraint $B$ is given in Algorithm \ref{alg:mu+1EA}. The \muoea at each iteration maintains a population of size $\mu$. The initial population $P_0$ has $\mu$ random bit strings chosen uniformly. At each iteration $t > 0$, a bit string is chosen uniformly at random from $P_t$ followed by a mutation operation which flips each bit of the chosen bit string with probability $\frac{1}{n}$. The mutated bit string is added to $P_t$ and the bit string with the least fitness among the $\mu + 1$ individuals is removed. Since we can also sample a bit string which violates the constraint, we consider the following function for optimization.

 \[g(x) = \begin{cases} 
      f(x), & \text{if $|x|_1 < B$};\\
     B - |x|, & \text{otherwise.}
   \end{cases}
\]

\begin{myAlgorithm}
	$P_0 \gets $ $\mu$ individuals from $\bitstring $ chosen u.a.r.\;\label{step:initialization_EA}
    $t = 0$\;
	\While{stopping criterion not met}{
    $x \gets $ uniform random bit string from $P_t$\;
	$y$ $\gets$ flip each bit of $x$ independently with probab.\ $1/n$\;
    $P_t = P_t \cup \{y\}$\;
    $P_{t+1} = P_t \setminus \{$an individual $x \in P_t$ with least $g(x)$ value$\}$\;\label{step:apdopt_current_best_EA}
    $t = t + 1$\;
	}
	\caption{\muoea on fitness function $f$ with constraint B}\label{alg:mu+1EA}
\end{myAlgorithm}

\section{Unmodified Setting}
\label{sec3}
In this section we give a tight analysis of the \ooea on the objective \LO with cardinality constraint $B$. 

We start with a technical lemma which we need for our proof of the upper bound.
\begin{lem}\label{lem:UBFB}
For $t \geq 0$, let $x_t$ denote the parent bit string at $t$-th iteration while \ooea is optimizing \LO with the cardinality constraint B. And for $t >0$, let $A_{t}$ denote the event that $|x_{t+1}|_1 = B$ and $LO(x_{t+1}) = LO(x_t)$. Then $Pr(A_{t} \bigm\vert |x_t|_1 < B) \leq \frac{n - B}{n}$.
\end{lem}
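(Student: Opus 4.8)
The plan is to prove the bound pointwise and then average: I will show that for \emph{every} fixed parent $x_t$ with $|x_t|_1 = m < B$ we already have $\Pr(A_t \mid x_t) \le (n-B)/n$, after which the stated claim follows immediately by the law of total probability, averaging over the conditional distribution of $x_t$ given $|x_t|_1 < B$.

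First I would remove the acceptance rule from consideration. The event $A_t$ can occur only if the offspring $y$ is the new parent, i.e. $x_{t+1} = y$: if $y$ is rejected then $x_{t+1} = x_t$ and $|x_{t+1}|_1 = m < B$, so the requirement $|x_{t+1}|_1 = B$ in $A_t$ fails. Hence $A_t \subseteq \{LO(y) = LO(x_t)\} \cap \{|y|_1 = B\}$, and it suffices to upper bound the probability of this mutation event, irrespective of how ties are broken.

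Next I would exploit the structure forced by $LO(y) = LO(x_t) =: k$ together with $|y|_1 = B$. Writing $x_t = 1^k 0 w$ with head $1^k$, critical bit $0$, and tail $w$ (note $k \le m < n$, so the critical bit exists), the parent tail contains $m-k$ ones and $b := n-1-m$ zeros. Since $y$ retains exactly $k$ leading ones and a $0$ in position $k+1$, all $B$ of its ones except the $k$ in the head lie in the tail, so the tail of $y$ has $B-k$ ones. As the tail started with only $m-k$ ones, at least $d := B-m \ge 1$ of the $b$ tail-zeros must flip to $1$ (flipping any tail-one to $0$ only raises this requirement). Therefore $\Pr(A_t \mid x_t) \le \Pr(\Bin(b,1/n) \ge d) \le \binom{b}{d}(1/n)^d$, the last step being a union bound over the $d$-subsets of tail-zeros that could all flip.

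The main obstacle, and the only step needing real care, is the final inequality $\binom{b}{d}(1/n)^d \le (n-B)/n$: a cruder estimate such as ``at least one zero flips'' gives only $\Pr(\Bin(b,1/n)\ge 1) \le b/n$, which exceeds $(n-B)/n$ when $m$ is small, so one must use that reaching $B$ ones from $m$ ones costs $d = B-m$ simultaneous zero-flips, whose factor $(1/n)^{d-1}$ suppresses the polynomial growth of $\binom{b}{d}$. Setting $s := n-B$ and noting $b = s+d-1$, I would factor $\binom{s+d-1}{d} = s \prod_{j=2}^{d} \frac{s+j-1}{j}$ and bound each factor by $\frac{s+j-1}{j} \le s+j-1 \le s+d-1 = b \le n$, which yields $\binom{b}{d} \le s\, n^{d-1}$ and hence $\binom{b}{d}(1/n)^d \le s/n = (n-B)/n$. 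Since this bound holds for every $m < B$ and every $k$, averaging over $x_t$ conditioned on $|x_t|_1 < B$ gives $\Pr(A_t \mid |x_t|_1 < B) \le (n-B)/n$, as required.
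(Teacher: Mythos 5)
Your proposal is correct and follows essentially the same route as the paper: both reduce the event to ``at least $B-|x_t|_1$ of the $n-|x_t|_1-1$ non-critical zeros must flip,'' apply the union bound $\binom{b}{d}(1/n)^d$, and then extract a single factor $(n-B)/n$ while bounding the remaining $d-1$ ratios by $1$. Your write-up is somewhat more careful about the acceptance step and the justification of the union bound, but the underlying argument is identical.
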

\begin{proof}
First note that, if $|x_t|_1 = k < B$ and $C_{t}$ denote the event that $x_{t+1}$ is formed by flipping $B - k$ number of $0$ bits to $1$ out of $n - k - 1$ (except the left most $0$) number of $0$ bits, then 
\[Pr(A_{t} \bigm\vert |x_t|_1 < B ) \leq Pr(C_{t} \bigm\vert |x_t|_1 < B).\]
The event $A_{t}$ is a sub-event of $C_{t}$, since in the event $C_{t}$ we do not have any restriction on the bits other than  $B - k$ number of $0$ bits out of $n - k - 1$ number of them and we have to flip at least $B - k$ number of $0$ bits to $1$ to get the desired $x_{t+1}$ in the event $A_{t}$. Hence,
\begin{align*}
Pr(C_{t}) 
    &= \binom{n - k - 1}{B - k} \left(\frac{1}{n}\right)^{B - k}\\
    &= \frac{(n - k - 1) \cdot (n - k - 2) \cdots (n - B)}{1 \cdot 2 \cdots (B - k)} \cdot \left(\frac{1}{n}\right)^{B - k} \leq \frac{n - B}{n}.
\end{align*}
The last inequality holds because, for every $r > 0$, $\frac{n - k - r}{n} \leq 1$.
\end{proof}

In the Theorem \ref{thm:ub} below we give an upper bound on the expected run time of the \ooea on \LO with cardinality constraint $B$. Later we show that this bound is tight by proving a matching lower bound.
\begin{thm}\label{thm:ub}
Let $n, B \in \natnum$ and $B < n$. Then the expected optimization time of the \ooea on \LO with cardinality constraint $B$ is $O\left(n^2 + n(n-B)\log B\right).$ %$$\Theta\left(n^2 + n(n-B)\log B\right).$$
\end{thm}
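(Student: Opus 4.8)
The plan is to use the fitness-level / additive-drift method while tracking both the LeadingOnes value $\ell := \LO(x_t)$ and the number of ones $|x_t|_1$. I would split the levels $\ell \in \{0,\dots,B-1\}$ into a \emph{loose} regime $\ell < 2B-n$ and a \emph{tight} regime $\ell \ge 2B-n$ (the loose regime being empty when $B \le n/2$), according to whether the cardinality bound can realistically be met while $\LO$ stays at $\ell$. For a fixed level the head (positions $1,\dots,\ell$) is all ones and the critical bit at position $\ell+1$ is zero, so any accepted step that leaves $\ell$ unchanged merely rearranges the $m = n-\ell-1$ tail bits. I would therefore analyse the induced walk of the tail one-count $k \in \{0,\dots,j\}$, where $j := B-\ell$, which is reflected at $k=j$ (the state $|x|_1 = B$) because larger counts are infeasible.

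The decisive per-step quantity is the probability of gaining a leading one. When $|x|_1 < B$ (that is $k \le j-1$), flipping only the critical bit keeps feasibility and increases $\LO$, so this probability is $\Omega(1/n)$; when $|x|_1 = B$ (that is $k=j$) one must in addition flip one of the $j$ tail ones, which costs a factor $\Theta(j/n)$ more, giving $\Theta(j/n^2)$. Hence everything hinges on how much time the tail walk spends at the bound. A short computation comparing the up-rate (flipping one of the $m-j = n-B-1$ tail zeros, probability $\approx (n-B)/n$) with the down-rate at the top (flipping one of the $j$ tail ones, probability $\approx j/n$) shows that the walk sits just below the bound with probability $\Theta(j/(n-B))$ and is otherwise pinned at the bound.

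Combining these gives the effective $\LO$-gain rate at level $\ell$ as $\Theta\big(\tfrac{j}{n-B}\cdot\tfrac1n + \tfrac{j}{n^2}\big) = \Theta\big(\tfrac{j}{n(n-B)}\big)$, so a tight-regime level costs $O\big(n(n-B)/j\big)$ in expectation; summing over $j$ from $1$ to $n-B$ (or to $B$ when $B \le n/2$) yields $\sum_j n(n-B)/j = O\big(n(n-B)\log B\big)$. In the loose regime the walk's natural centre $m/2$ lies above the bound $j$, so the bound is reached only rarely, each level behaves like unconstrained \LO and costs $O(n)$, for a total of $O(n\cdot(2B-n)) = O(n^2)$ since $2B-n \le B \le n$. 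Adding the two regimes gives the claimed $O(n^2 + n(n-B)\log B)$.

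To turn this into a proof I would build a single potential $\phi$ of the form advertised in the introduction: a term proportional to $\sum_{\ell' \ge \ell}$ (cost of level $\ell'$) that is released when a leading one is gained, plus a boundary penalty charged only while $|x_t|_1 = B$ and released when the walk steps below the bound; I would then check $E[\phi(x_t) - \phi(x_{t+1}) \mid x_t] \ge 1$ and $\phi_{\max} = O(n^2 + n(n-B)\log B)$ and invoke the additive drift theorem. Here \refl{lem:UBFB} is exactly what bounds the probability of jumping from a sub-bound state onto the bound by $(n-B)/n$, capping the potential increase contributed by the boundary-penalty term. The main obstacle is the boundary-occupation estimate: since the \ooea flips many bits simultaneously, the tail one-count is not a clean birth--death chain, so the ratio $\Theta(j/(n-B))$ must be justified by drift/comparison arguments for the reflected multi-bit walk, and the coupling between these rapid within-level fluctuations and the rare $\LO$-increasing steps must be controlled so that the two drift contributions reinforce rather than cancel. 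That coupling, not the summation, is where the real difficulty lies.
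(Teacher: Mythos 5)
Your plan is, in its final form, the paper's proof: a two-tier potential (one value per \LO-level for states at the cardinality bound, a strictly larger one for states below it), drift at least $1$ verified separately in the two cases, Lemma~\ref{lem:UBFB} supplying the $(n-B)/n$ cap on falling onto the bound, and the additive drift theorem with maximal potential $O(n^2 + n(n-B)\log B)$. The one substantive correction concerns the obstacle you single out at the end: the $\Theta(j/(n-B))$ boundary-occupation ratio for the reflected multi-bit walk never needs to be justified, because the calibrated potential is precisely the device that replaces any occupation-time or stationary-distribution argument by two one-step estimates. From the bound with $\LO=i$, the probability of shedding a single surplus one is at least $(B-i)/(en)$, so awarding a bonus of $en/(B-i)$ for being below the bound already yields drift $1$ there; from below the bound, the probability of being recaptured by the bound within the same level is at most $(n-B)/n$ by Lemma~\ref{lem:UBFB}, so inflating the reward for gaining a leading one by an extra $e(n-B)/(B-i)$ exactly cancels the expected loss of that bonus and again leaves drift $1$. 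Consequently no comparison argument for the multi-bit walk and no control of the coupling between within-level fluctuations and the rare $\LO$-gains is required; the loose/tight regime split is also unnecessary for the upper bound, since summing the per-level potential increments $\frac{en}{B-i} + en\bigl(1 + \frac{e(n-B)}{B-i+1}\bigr)$ over all levels directly gives $O(n^2 + n(n-B)\log B)$. Your occupation-ratio heuristic is the right intuition for why this is the truth (and resurfaces in the lower bound), but as a proof step it is a detour.
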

\begin{proof}
% First, we compute the upper bound on the expected optimization time. 
From \cite[Lemma~3]{lo}, we know that the \ooea is expected to find a feasible solution within $O(n\log(n/B))$ iterations. Now we calculate how long it takes in expected value to find the optimum after a feasible solution is sampled.

To do this, we construct a potential function that yields an drift value greater than $1$ at each time $t$ until the optimum is found. For $i \in \{0, \cdots , B\}$, let $g_{B}(i)$ be the potential of a bit string $x \in \bitstring$ with exactly $B$ number of $1$s and $LO(x) = \LO(x) = i$. For $i \in \{0, \cdots , B - 1\}$, let $g_{<B}(i)$ be the potential of a bit string  $x \in \bitstring$ with less than $B$ number of $1$s and $LO(x) = i$.

Let 
$g_B(0) = 0 \text{ and } g_{<B}(0) = \frac{en}{B}.$
And for every $i \in \{1, \cdots, B\}$, let
$$g_{B}(i) = en\left(1 + \frac{e\cdot(n-B)}{B - i + 1}\right) + g_{<B}(i-1),$$
and for every $i \in \{1, \cdots, B - 1\}$, let
$$g_{<B}(i) = \frac{en}{B - i} + g_{B}(i).$$

For $t > 0$, let $X_t$ be the parent bit string of \ooea at iteration $t$. and let $T$ be the iteration number at which \ooea finds the optimum for the first time. Let 
\begin{equation}
f(X_t)=
    \begin{cases}
        g_{B}(LO(X_t)) & \text{if } |X_t|_1 = B,\\
        g_{<B}(LO(X_t) & \text{if } |X_t|_1 < B.
    \end{cases}
\end{equation} 

% Suppose we start with the worst possible bit string, (i.e.) $X_0 = 0^{n-B}1^{B}$. Then, in the next iteration, the probability that the number of $1$s in the search point can decrease by $1$ is at least $\frac{B}{en}$. This is because we can get a desired search point by flipping only one of the $1$ bits out of $B$ of them and not flipping any other bit. This leads to an expected drift,
% \begin{align*}
% E[f(X_1) - f(X_0)] 
%     &\geq (g_{<B}(0) - g_B(0)) \cdot Pr(|X_1|_1 < B)\\
%     &\geq \left(\frac{en}{B} - 0\right) \cdot \left(\frac{B}{en} \right)\\
%     &= 1.
% \end{align*}

We consider two different cases, $|X_t|_1 = B$ and $|X_t|_1 < B$ and show in both the cases the drift is at least 1. Suppose we are in an iteration $t < T$ with $LO(X_t) = i$ and $|X_t|_1 = B$. Then the probability that the number of $1$s in the search point can decrease by $1$ in the next iteration is at least $\frac{B - i}{en}$. This is because we can get a desired search point by flipping only one of the $1$ bits of $B - i$, excluding the leading $1$s, and not flipping any other bit. Therefore,
\begin{align*}
E[f(X_{t+1}) - f(X_t) \bigm\vert & LO(X_t) = i, \text{ } |X_t|_1 = B]\\
    &\geq (g_{<B}(i) - g_{B}(i)) \cdot Pr(|X_{t+1}|_1 < B)\\
    &\geq \left(\frac{en}{B - i} + g_{B}(i) - g_{B}(i)\right) \cdot \left(\frac{B - i}{en} \right) = 1.
\end{align*}
Suppose we are in an iteration $t < T$ with $LO(X_t) = i$ and $|X_t|_1 < B$. Then in the next iteration the value of \LO can increase when the leftmost $0$ is flipped to $1$ as this does not violate the constraint. This happens with probability at least $\frac{1}{en}$. Since $|X_t|_1 < B$, we can also stay in the same level (same number of leading $1$s) and the number of $1$s can increase to $B$ with probability at most $\frac{n - B}{n}$ (see Lemma~\ref{lem:UBFB}). This implies that the potential can decrease by $\frac{en}{B - i}$ with probability at most $\frac{n - B}{n}$. 
\begin{align*}
\begin{split}
E[f(X_{t+1}) - &f(X_t) \bigm\vert LO(X_t) = i, |X_t|_1 < B]\\
    &\geq (g(i+1, B) - g_{<B}(i)) \cdot \frac{1}{en} - \left(\frac{en}{B - i} \cdot \frac{n - B}{n}\right)\\
    &\geq \left(en\left(1 + \frac{e\cdot(n-B)}{B - i}\right) + g_{<B}(i) - g_{<B}(i)\right) \cdot \left(\frac{1}{en} \right) \\
    &\qquad - \left(\frac{e\cdot(n - B)}{B - i} \right) \\
    &= 1.
\end{split}
\end{align*}
This results in an expected additive drift value greater than $1$ in all the cases, so according to the additive drift theorem \cite[Theorem~5]{ad},
\begin{align*}
\begin{split}
E[T] &\leq f(X_T) = g_B(B) \\
    &= \mathlarger{\mathlarger{\sum}}_{i=0}^{B-1} (g_{<B}(i) - g_{B}(i)) + \mathlarger{\mathlarger{\sum}}_{i=1}^{B} ( g_{B}(i) - g_{<B}(i - 1))\\
    &= \mathlarger{\mathlarger{\sum}}_{i=0}^{B-1} \frac{en}{B - i} +     \mathlarger{\mathlarger{\sum}}_{i=1}^{B} en\left(1 + \frac{e\cdot(n-B)}{B - i + 1}\right)\\
    &=  en\mathlarger{\mathlarger{\sum}}_{i=1}^{B} \left(\frac{1}{i}\right) + enB + e^2\cdot n(n-B) \mathlarger{\mathlarger{\sum}}_{i=1}^{B} \left(\frac{1}{i}\right)\\
    &\leq en(\log B + 1) + enB + e^2\cdot n(n-B)(\log B + 1)\\
    &= O(n^2 + n(n-B)\log B).
\end{split}
\end{align*}
\end{proof}

% \begin{cor}
% Let $n, B \in \natnum$ and $B < n$. Then the \ooea optimizing \LO with the cardinality constraint $B$ finds a search point with $B/2$ number of \LO within $O(n^2)$ in expectation. 
% \end{cor}
% \begin{proof}
%     Let $x_t$ be the search point at the $t^{th}$ iteration of the \ooea  optimizing \LO with the cardinality constraint $B$ and $T_1 = \min\{t \geq 0 \bigm\vert LO(x_t) = B/2\}$. Then from Theorem \ref{thm:ub} we know that,
% \begin{align*}
% \begin{split}
% E[T_1] &\leq g_{<B}(B/2) \\
%     &= \mathlarger{\mathlarger{\sum}}_{i=0}^{B/2} (g_{<B}(i) - g_{B}(i)) + \mathlarger{\mathlarger{\sum}}_{i=1}^{B/2} ( g_{B}(i) - g_{<B}(i - 1))\\
%     &= \mathlarger{\mathlarger{\sum}}_{i=0}^{B/2} \frac{en}{B - i} +     \mathlarger{\mathlarger{\sum}}_{i=1}^{B/2} en\left(1 + \frac{e\cdot(n-B)}{B - i + 1}\right)\\
%     &=  en\mathlarger{\mathlarger{\sum}}_{i=B/2}^{B} \left(\frac{1}{i}\right) + \frac{enB}{2} + e^2\cdot n(n-B) \mathlarger{\mathlarger{\sum}}_{i=\frac{B}{2} + 1}^{B} \left(\frac{1}{i}\right)\\
%     % &\leq en(\log B + 1 - \log(B/2)) + \frac{enB}{2} \\
%     % &\qquad + e^2\cdot n(n-B)(\log B + 1 - \log(B/2))\\
%     &\leq en(1 + \log(2)) + \frac{enB}{2} + e^2\cdot n(n-B)(1 + \log(2))\\
%     &= O(n^2).
% \end{split}
% \end{align*}
% \end{proof}

We now turn to the lower bound. When \ooea optimizes \LO in unconstrained setting the probability that a bit which is after the left-most $0$ is $1$ is exactly $\frac{1}{2}$. But this is not true in the constrained setting. The following lemma gives an upper bound on this probability during the cardinality constraint optimization.
\begin{lem}\label{lem:distributionInTail}
    For any $t \geq 0$, let $x^t$ denote the search point at iteration $t$ when \ooea is optimizing \LO with the cardinality constraint $B$. Then for any $t \geq 0$ and $i > LO(x^t)$, $Pr(x_i^t = 1 ) \leq 1/2$.
\end{lem}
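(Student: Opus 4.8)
The plan is to prove the equivalent (conditional) statement that, writing $D_t := \Pr(x^t_i = 0 \wedge LO(x^t) < i) - \Pr(x^t_i = 1 \wedge LO(x^t) < i)$, one has $D_t \ge 0$ for every $t$, by induction on $t$; dividing by $\Pr(LO(x^t)<i)$ then gives the claimed $\Pr(x^t_i = 1 \mid i > LO(x^t)) \le 1/2$. For $t=0$ the string is uniform: if $i$ is the position of the first zero then $x^0_i = 0$ with certainty, while if $i$ lies strictly beyond the first zero then $x^0_i$ is an independent fair coin; either way the event $\{x^0_i = 1\}$ is no more likely than $\{x^0_i = 0\}$ inside the tail, so $D_0 \ge 0$. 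A useful structural observation is that the selection of the \ooea never decreases $LO$, so once $i \le LO(x^t)$ this stays true forever; moreover the only way position $i$ can leave the tail is by becoming a leading $1$, i.e. an exit event removes mass only from $\{x^t_i = 1\}$, which can only increase $D_t$. Hence it suffices to control the dynamics while $i$ is still a tail position.

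The engine of the inductive step is the involution $\phi_i$ that flips bit $i$. Because mutation flips each bit independently, it is symmetric under $\phi_i$: the probability $q(x,o)$ of producing offspring $o$ from parent $x$ equals $q(\phi_i(x),\phi_i(o))$, and a mutation that leaves bit $i$ untouched is a factor $n-1$ more likely than the mutation that additionally flips bit $i$, so in the notation $q_0$ (no $i$-flip) and $q_1$ ($i$-flip) we have $q_0 = (n-1)\,q_1$. The constraint breaks the symmetry only through the number of ones: for a tail position, flipping bit $i$ from $1$ to $0$ lowers $|x|_1$ and so is accepted whenever the mirrored $0\to1$ flip is, and strictly more often, namely exactly at the boundary $|x|_1 = B$. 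I would pair every transition between a configuration pair $\{x,\phi_i(x)\}$ and a target pair $\{z,\phi_i(z)\}$, grouping the non-$i$-flipping moves with the $i$-flipping ones. Using the induction hypothesis (the bit-$i$-$0$ configuration carries at least as much mass as its partner) together with the acceptance monotonicity above, each such block contributes a term of the form $(q_0 - q_1)\cdot(\text{mass gap}) \ge 0$, and a strictly nonnegative term at the boundary, so that the net one-step drift of bit $i$ is toward $0$ and $D_{t+1} \ge 0$.

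The hard part is the bookkeeping of the inductive step rather than any single inequality. The delicate point is that for the $i$-flipping moves the induction hypothesis (more mass on the bit-$i$-$0$ side) and the acceptance monotonicity (the $0\to1$ move is rejected more often) push in opposite directions; this is resolved only by refusing to treat the $i$-flipping moves in isolation and always grouping them with the corresponding non-flipping moves, where the factor $q_0 > q_1$ dominates. Two further technical obstacles must be handled: the self-loop (rejection) probabilities, which couple all outgoing transitions of a state and must be reintroduced via conservation of probability, and the behaviour exactly at the capacity boundary $|x|_1 = B$, where both the feasibility penalty and the tie-breaking of selection (the rule encoded by $g$) enter. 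I expect the cleanest route is in fact to strengthen the induction hypothesis to the pointwise domination $\Pr(x^t = x) \ge \Pr(x^t = \phi_i(x))$ for every tail configuration $x$ with $x_i = 0$, which makes the per-block estimates self-contained and yields $D_t \ge 0$ after summation over all such $x$; verifying that this stronger statement is preserved by one step, self-loops included, is the crux of the argument.
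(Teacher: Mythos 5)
Your reduction to $D_t\ge 0$, the base case, and the treatment of the inflow terms via the bit-$i$ involution and the factor $q_0=(n-1)q_1$ are all sound, and they correctly identify where the paper's own argument gets its leverage (the paper works directly with the marginal $p=\Pr(x^t_i=1)$ and the aggregate acceptance probabilities $a\le b$, never descending to the configuration level). The problem is the step you yourself flag as the crux and defer: the strengthened pointwise induction hypothesis $\Pr(x^t=x)\ge\Pr(x^t=\phi_i(x))$ is \emph{false}, so the proposed induction cannot be closed. Concretely, take a pair $x,y=\phi_i(x)$ with the first zero of $x$ at some position $j<i$, $|x|_1=B-1$ and $|y|_1=B$, and start from the uniform distribution at $t=0$, where the two have equal mass. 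Inflows from other $\phi_i$-pairs balance, and the direct exchange $x\leftrightarrow y$ balances; but $x$ loses mass to the roughly $n-B$ accepted offspring obtained by flipping one further tail zero (these offspring still have bit $i$ equal to $0$), whereas the corresponding offspring of $y$ have $B+1$ ones, are infeasible and are rejected, so $y$ keeps that mass as a self-loop. Hence already $\Pr(x^1=y)>\Pr(x^1=x)$. This is exactly the ``bias towards the bound $B$'' that the paper describes: states at the cardinality boundary are stickier than their bit-$i$-flipped partners, and no amount of bookkeeping of self-loops will reverse the inequality. (A smaller issue: as literally stated, your strengthening also includes pairs where $x=1^{i-1}0\cdots$, whose partner has $LO\ge i$ and eventually absorbs essentially all the mass; those pairs must be excluded outright.)

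The reason the lemma survives while your strengthening does not is that the problematic flow moves mass from $x$ to other configurations that still have bit $i$ equal to $0$, so it cancels inside the aggregate $D_t$ even though it reshuffles mass within the class $\{x^t_i=0\}$. Any correct proof therefore has to stay at the aggregate level. The paper does this by writing $\Pr(x^{t+1}_i=1)$ in terms of $p=\Pr(x^t_i=1)$ and the two conditional acceptance probabilities $a=\Pr(A\mid i\text{ flipped},\,x^t_i=0)$ and $b=\Pr(A\mid i\text{ flipped},\,x^t_i=1)$, using only $a\le b$ and $p\le 1/2$; the self-loop and within-class flows never appear because they do not change bit $i$. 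If you want to salvage your configuration-level pairing, you would have to aggregate before comparing — e.g.\ bound $\sum_{\text{pairs}}[\Pr_{t+1}(x)-\Pr_{t+1}(\phi_i x)]$ as a whole, observing that the rejected-offspring surplus of $y$ is matched by inflow surplus into other bit-$i$-zero states — which is essentially a reproof of the paper's marginal computation rather than a strengthening of it.
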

\begin{proof}
    We will prove this by induction. The base case is true because we have an uniform random bit string at $t = 0$. Lets assume that the statement is true for $t$, i.e.  for any $i > LO(x_t)$, $Pr(x^t_i = 1 ) \leq 1/2$. Let $A$ be the event that the offspring is accepted. Then, for $i > LO(x_{t+1})$,
\begin{align*}
\begin{split}
Pr(x^{t+1}_i = 1 )
    &= Pr((x^t_i = 0) \cap (i^{th}\text{ bit is flipped}) \cap A) \\
    &+ Pr((x^t_i = 1) \cap (i^{th}\text{ bit is flipped}) \cap A^c) \\
    &+ Pr((x^t_i = 1) \cap (i^{th}\text{ bit is not flipped}).\\
\end{split}
\end{align*}    
Let $Pr(x^t_i = 1) = p$, $Pr( A \bigm\vert (i^{th}\text{ bit is flipped} \cap x^t_i = 0)) = a$ and $Pr(A \bigm\vert (i^{th}\text{ bit is flipped} \cap x^t_i = 1)) = b$. Then note that $a \leq b$ (because we have at least as many events as in probability $a$ contributing to the probability $b$) and by induction hypothesis,
\begin{align*}
\begin{split}
Pr(x^{t+1}_i = 1 )
    &= (1 - p)\cdot1/n \cdot a + p \cdot 1/n \cdot (1 - b) + p \cdot (1 - 1/n)\\
    &= a/n - (p\cdot a)/n + p/n - (p\cdot b)/n + p - p/n\\
    &= a/n - p \cdot (1 - a/n - b/n) \\
    &\leq a/n + 1/2 \cdot (1 - a/n - b/n) \\
    &\leq a/n + 1/2 \cdot (1 - a/n - a/n) = 1/2.
\end{split}
\end{align*}  
\end{proof}

We use the previous lemma to prove the $\Omega(n^2)$ lower bound on the expected time in the next theorem.

\begin{thm}\label{thm:lbn2}
Let $n, B \in \natnum$. Then the expected optimization time of the \ooea on the \LO with cardinality constraint $B$ is $\Omega\left(n^2\right).$ %$$\Theta\left(n^2 + n(n-B)\log B\right).$$
\end{thm}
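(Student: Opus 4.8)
The plan is to establish the $\Omega(n^2)$ lower bound by showing that optimizing \LO with a cardinality constraint is at least as hard as optimizing unconstrained \LO, for which an $\Omega(n^2)$ lower bound is classical. The core intuition is that to make progress in \LO value, the algorithm must flip the leftmost $0$ (the critical bit) to $1$, and this event costs $\Theta(1/n)$ probability per iteration regardless of the constraint. Since there are $\Theta(n)$ such progress steps to reach the optimum, the total expected time should be $\Omega(n^2)$.

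\medskip

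First I would set up a drift or direct counting argument focused on the \LO value itself. The key observation is that $\LO(x^{t+1}) > \LO(x^t)$ requires flipping the current critical bit (the leftmost $0$), which happens with probability at most $1/n$ in any given mutation, independent of what else happens. More carefully, to increase the \LO value from $i$ to some larger value, the bit at position $i+1$ must be flipped from $0$ to $1$; the probability of flipping this specific bit is exactly $1/n$. Thus in expectation, the \LO value increases by at most $O(1/n)$ per iteration. I would formalize this as an upper bound on the drift of the potential $\LO(X_t)$: namely $E[\LO(X_{t+1}) - \LO(X_t) \mid X_t] \le c/n$ for some constant $c$. Here Lemma~\ref{lem:distributionInTail} becomes essential, because after the critical bit is flipped, the new \LO value depends on how many consecutive $1$s immediately follow; the lemma guarantees that bits in the tail are $1$ with probability at most $1/2$, so the expected number of additional leading $1$s gained in one step is bounded by a geometric-type sum $\sum_{k\ge 1} (1/2)^k = O(1)$, keeping the drift at $O(1/n)$.

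\medskip

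Once the drift bound $E[\LO(X_{t+1}) - \LO(X_t)] \le c/n$ is in hand, I would apply the additive drift theorem \cite[Theorem~5]{ad} in its lower-bound form. Since the optimum has \LO value $B$ and the algorithm starts (after reaching feasibility) at some bounded \LO value, the distance to be covered in potential is $\Omega(B)$, or at least $\Omega(n)$ in the regime where $B = \Omega(n)$; dividing by the per-step drift $O(1/n)$ yields expected time $\Omega(n^2)$. Care is needed regarding the regime of $B$: if $B$ is very small this direct argument must be adapted, but since the theorem asserts $\Omega(n^2)$ unconditionally, I would verify the bound holds by noting that reaching even a constant fraction of the \LO target still requires $\Omega(n)$ critical-bit flips each costing $\Theta(1/n)$, and handle small-$B$ cases by observing that the claimed optimum still requires flipping $\Theta(n)$ distinct critical bits over the course of the run (each position's first $0$ must be flipped exactly once to be fixed as a leading $1$).

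\medskip

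The main obstacle I anticipate is rigorously controlling the \emph{expected jump size} when the critical bit flips, since a single mutation can flip the critical bit and simultaneously reveal a long run of pre-existing $1$s in the tail, causing \LO to jump by more than $1$. This is precisely where Lemma~\ref{lem:distributionInTail} must be invoked with care: I must argue that the distribution of tail bits (each being $1$ with probability at most $1/2$, and ideally with enough independence) ensures the expected length of the newly-exposed run of $1$s is $O(1)$. Establishing the needed independence or a suitable stochastic-domination statement for the tail bits — rather than just the marginal bound of Lemma~\ref{lem:distributionInTail} — is the delicate technical point, and I would likely need an auxiliary argument bounding the probability of long runs in the tail to make the drift computation fully rigorous.
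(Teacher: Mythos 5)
Your overall strategy --- bounding the expected one-step gain in \LO value by $O(1/n)$ and applying the lower-bound direction of additive drift over a distance of $\Omega(n)$ --- is a genuinely different route from the paper, which instead uses the fitness-level method with visit probabilities: it bounds the probability $p_i \le \frac{1}{n}(1-\frac{1}{n})^i$ of leaving each \LO-level and shows via Lemma~\ref{lem:distributionInTail} that each level $i$ is \emph{visited} with probability $v_i \ge 1/2$, then sums $v_i/p_i$. The crucial difference is that the paper's argument never needs to control the size of a jump: to show level $i$ is not skipped, it suffices to know the marginal probability that the single bit at position $i+1$ is $0$, which is exactly what Lemma~\ref{lem:distributionInTail} delivers.

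This is where your proposal has a genuine gap, and you have correctly located it yourself but not closed it. To bound $E[\LO(X_{t+1})-\LO(X_t)]$ by $O(1/n)$ you must bound the expected length $R$ of the run of $1$s exposed after the critical bit flips, and $E[R]=\sum_{k\ge 1}\Pr(\text{the next }k\text{ tail bits are all }1)$. Lemma~\ref{lem:distributionInTail} only gives the marginal bound $\Pr(x_i^t=1)\le 1/2$ for each single position; without independence or a joint/stochastic-domination statement, the best you can conclude is $\Pr(\text{all }k\text{ are }1)\le 1/2$ for every $k$, hence only $E[R]\le n/2$, which yields a per-step drift of $O(1)$ and a useless $\Omega(n)$ bound. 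In the unconstrained setting the tail bits are exactly independent uniform and $E[R]=O(1)$ follows, but under the cardinality constraint the acceptance step couples the tail bits, and no lemma in the paper (nor in your proposal) establishes the needed control on runs. Until you supply that auxiliary lemma --- or switch to an argument like the paper's that only consumes marginals --- the drift computation at the heart of your proof is not established. As a secondary point, your fallback for small $B$ is also not convincing: reaching \LO value $B$ requires only $B$ critical-bit flips, not $\Theta(n)$, so the "distance $\Omega(n)$" premise of your drift application needs a separate justification when $B=o(n)$.
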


\begin{proof}
We use the \textit{fitness level method with visit probabilities} technique defined in \cite[Theorem~8]{doerr2021lower} to prove this lower bound. Similar to \cite[Theorem~11]{doerr2021lower}, we also partition the search space $\bitstring$ based on the \LO values. For every $i \leq B$, let $A_i$ contain all the bit strings with the \LO value $i$. If our search point is in $A_i$, then we say that the search point is in the state $i$. For every $i \in \{1, \cdots, B-1\}$, we have to find the visit probabilities $v_i$ and an upper bound for $p_i$, the probability to leave the state $i$.

The best case scenario for the search point to leave the state $i$ is when the number of $1$s in the search point is less than $B$. In this case, we have to flip the $(i+1)^{th}$ bit to $1$ and should not flip any of the first $i$ bits to $0$. This happens with the probability $\frac{1}{n} \cdot \left(1 - \frac{1}{n}\right)^i$. Therefore, for every $i \in \{1, \cdots, B-1\}$, $p_i \leq \frac{1}{n} \cdot \left(1 - \frac{1}{n}\right)^i$.

Next, we claim that, for each $i \in \{1, \cdots, B-1\}$,  $v_i$ -- the probability to visit the state $i$ is at least $\frac{1}{2}$. We use \cite[Lemma~10]{doerr2021lower} to show this. Suppose the initial search point is in a state greater than or equal to $i$, then the probability for it to be in state $i$ is equal to the probability that the $(i+1)^{th}$ bit is $0$. Since the initial bit string is chosen uniformly at random the probability that the $(i+1)^{th}$ bit is $0$ is $\frac{1}{2}$. This shows the first required bound on the probability for the lemma in \cite[Lemma~10]{doerr2021lower}. Suppose  the search point is transitioning into a level greater than or equal to $i$, then the probability that it transition into state $i$ is equal to the probability that $(i+1)^{th}$ bit is $0$. From Lemma \ref{lem:distributionInTail}, we know that this probability is at least $1/2$. This gives the second bound required for the \cite[Lemma~10]{doerr2021lower}, therefore $v_i$ is at least $\frac{1}{2}$. 

By using fitness level method with visit probabilities theorem \cite[Theorem~8]{doerr2021lower}, if $T$ is the time taken by the \ooea to find an individual with $B$ number of \LO for the first time then, we have,
$ E[T] \geq \sum_{i = 0}^{B-1}\frac{v_i}{p_i} = \frac{n}{2}\cdot \mathlarger{\sum}_{i = 0}^{B-1}\left(1 - \frac{1}{n}\right)^{-i} \geq \frac{n^2}{2} = \Omega(n^2).$
\end{proof}
We aim to show the $\Omega(n^2 + n(n-B)\log B)$ lower bound and Theorem~\ref{thm:lbn2} gives the $\Omega(n^2)$ lower bound. Therefore, next we consider the case where $B$ is such that $n(n-B)\log B \neq O(n^2)$ 
to prove the desired lower bound.

\begin{thm}\label{thm:lb}
Let $n, B \in \natnum$ and suppose $n(n - B)\log B = \omega(n^2)$. Then the expected optimization time of the \ooea on the objective \LO with cardinality constraint $B$ is $\Omega\left(n(n-B)\log B\right).$ %$$\Theta\left(n^2 + n(n-B)\log B\right).$$
\end{thm}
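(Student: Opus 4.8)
The plan is to obtain the bound level-by-level, via a refinement of the argument behind Theorem~\ref{thm:lbn2}. Rather than only bounding the one-step probability of leaving a \LO-level, I would lower bound the \emph{expected number of steps} the \ooea spends with $\LO(X_t)=i$. Since the \LO value never decreases once a feasible point is reached, the total time equals the sum over levels of the time spent in each level. Hence it suffices to show that for every level $i$ in the ``constrained'' range $2B-n<i<B$ the expected sojourn time is $\Omega\!\left(n(n-B)/(B-i)\right)$, and that each such level is reached with probability $\Omega(1)$; the harmonic summation then produces the logarithmic factor.

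The key structural observation is that, while $\LO(X_t)=i$, only two regimes matter: $|X_t|_1=B$ (at the bound) and $|X_t|_1<B$. From a below-bound state the level can be left with probability at most $1/n$, since the critical bit must flip; from a bound state leaving additionally requires a compensating $1\to0$ flip among the $B-i$ tail ones, so by a union bound the probability is at most $(B-i)/n^2$. The decisive quantity is therefore the fraction of the sojourn spent below the bound. I would model the number of ones inside a level as a biased birth--death walk: from $|x|_1=B$ it drops below the bound with probability $O((B-i)/n)$ (flipping one of the $B-i$ tail ones), while from a below-bound state it climbs back with probability $\Omega((n-B)/n)$ (flipping one of the $\approx n-B$ tail zeros). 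In the range $i>2B-n$ we have $B-i<n-B$, so the walk is pushed hard against the cap $B$; a renewal/excursion estimate (lower bounding the expected length of a stay at the bound by $\Omega(n/(B-i))$ and upper bounding the expected excursion length below the bound by $O(n/(n-B))$) then shows that the expected time below the bound is at most an $O((B-i)/(n-B))$ fraction of the total sojourn. Conditioning on the first visit to $|x|_1=B$, which occurs with probability $\Omega(1)$ before the level is left (the climbing probability $\Omega((n-B)/n)$ dominates the leaving probability $O(1/n)$), makes this excursion analysis clean.

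Writing $S, S_{=B}, S_{<B}$ for the (conditional) sojourn lengths, and using that the level is left exactly once, the identity $1=\sum_t \Pr[\text{leave at }t]$ together with the two per-step leaving bounds gives $1\le \frac{B-i}{n^2}\,E[S_{=B}]+\frac1n\,E[S_{<B}]$. Substituting $E[S_{<B}]=O\!\left((B-i)/(n-B)\right)E[S]$ and $E[S_{=B}]\le E[S]$, and using $n-B\le n$ so that the second term dominates, yields $E[S]=\Omega\!\left(n(n-B)/(B-i)\right)$. Next I would lower bound the visit probabilities exactly as in Theorem~\ref{thm:lbn2}: by Lemma~\ref{lem:distributionInTail} and \cite[Lemma~10]{doerr2021lower}, each level is reached with probability at least $1/2$. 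Summing over $k=B-i$ from $1$ to $\min\{B,n-B\}-1$ gives $E[T]=\Omega\!\big(n(n-B)\sum_k 1/k\big)=\Omega\!\left(n(n-B)\log(\min\{B,n-B\})\right)$, and a short case distinction upgrades this to the claimed $\Omega(n(n-B)\log B)$: for $B\le n/2$ the minimum is $B$, while for $B>n/2$ the hypothesis $n(n-B)\log B=\omega(n^2)$ forces $n-B=\omega(n/\log n)$, so that $\log(n-B)=\Theta(\log n)=\Theta(\log B)$.

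The hard part will be the excursion/renewal estimate establishing that the walk spends only an $O((B-i)/(n-B))$ fraction of its time strictly below the bound; this requires controlling both the frequency and the (upward-drifting) length of excursions below $B$ uniformly within the level, and is where Lemma~\ref{lem:distributionInTail} is again useful, since it keeps the number of tail ones, and hence the downward transition probability, under control. Everything else---the per-step leaving probabilities, the summation, and the final regime argument---is routine.
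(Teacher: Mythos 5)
Your strategy is sound, but it is a genuinely different route from the paper's. The paper proves this bound globally with the lower-bound version of the additive drift theorem applied to a single potential function $g(x) = \frac{n\,|x|_1}{B - LO(x) + 1} + \sum_{i=LO(x)}^{B-1}\frac{n(n-B)}{32e^2(B-i)}$: the first term credits being below the cardinality bound, the second credits leading-one gains, and the whole proof reduces to showing that the expected one-step decrease of $g$ is at most a constant ($10$) in both the $|x_t|_1 = B$ and $|x_t|_1 < B$ cases, while $E[g(x_0)] = \Omega(n(n-B)\log B)$ exceeds the optimum's potential $nB$ under the hypothesis. You instead make the random walk on $|x|_1$ within a \LO-level explicit, bound the per-step leaving probabilities from the at-bound and below-bound states by $(B-i)/n^2$ and $1/n$ respectively, and convert an excursion estimate (the fraction of the sojourn spent below the bound is $O((B-i)/(n-B))$) into a per-level sojourn bound of $\Omega(n(n-B)/(B-i))$, then sum with visit probabilities as in Theorem~\ref{thm:lbn2}. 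What your approach buys is transparency and finer information: it directly formalizes the intuition stated in the paper's introduction about time spent at versus below the bound, and it yields per-level expected times rather than only the total. What the paper's approach buys is that it sidesteps exactly the step you correctly identify as hard: the renewal/excursion analysis (Wald-type identities for the block decomposition, an additive-drift bound of $O(n/(n-B))$ on excursion lengths, and a uniform constant-probability argument for reaching $|x|_1 = B$ before leaving a level, which forces you to restrict to levels with $B - i \leq c(n-B)$ and then recover $\log B$ via your regime case distinction). Note also that you cannot invoke the fitness-level theorem of \cite{doerr2021lower} off the shelf here, since the leaving probability is not uniform over a level; you need the hand-rolled decomposition $E[T] \geq \sum_i \Pr[\text{visit } i]\, E[S_i \mid \text{visit } i]$, which is valid because levels are visited in increasing order. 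With those details carried out, your argument gives the same $\Omega(n(n-B)\log B)$ bound.
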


\begin{proof}

% We aim to show that the expected time to find the optimum after we find a search point with at least $\frac{B}{2}$ number of \LO, for the first time, is at least $\Omega(n(n - B)\log B)$. 

We consider the potential function $g$ such that, for all $x \in \bitstring$,
$$
g(x) = \frac{n \cdot |x|_1 }{B - LO(x) + 1} + \sum_{i=LO(x)}^{B-1}\frac{n(n-B)}{32e^2(B-i)}.
$$
The first term appreciates progress by reducing the number of $1$s. This is scaled to later derive constant drift in expectation from such a reduction whenever $|x|_1 = B$, the case where progress by increasing the number of leading $1$s is not easy. The second term appreciates progress by increasing the number of leading $1$s, scaled to derive constant drift in case of $|x|_1 < B$.

% For any $x \in \bitstring$, let $g_1(x) =\frac{n \cdot |x|_1 }{B - LO(x) + 1}$ and let $ g_2(x) = \sum_{i=LO(x)}^{B-1}\frac{n(n-B)}{32e^2(B-i)}.$

The idea of the proof is as follows. We show that
%once we have a search point with at least $B - \frac{(n-B)}{2e}$ many leading $1$s, this 
the potential decreases by at most $10$ in expectation. Then the lower bound of additive drift theorem will give the desired lower bound on the expected run time (see \cite[Theorem~5]{ad}).

We start by calculating the expected potential at $t = 0$. Since the initial bit string is chosen uniformly at random the probability that the first bit is $0$ is $\frac{1}{2}$. Therefore $Pr(LO(x_0) = 0) = \frac{1}{2}$, which implies
%And from Lemma~\ref{lem:distributionInTail} we know that $Pr\left(LO(x_{0}\right) = \frac{B}{2}) \geq \frac{1}{2}$, therefore, 
 
 \begin{align*}
E[g(x_{0})] &\geq \frac{1}{2} \cdot E\left[ \sum_{i=LO(x_{0})}^{B-1}\frac{n(n-B)}{32e^2(B-i)} \bigm\vert LO(x_{0}) = 0\right] \\
&= \frac{n(n-B)}{64e^2} \sum_{i = 0}^{B-1}\frac{1}{B-i} \\
&= \frac{n(n-B)}{64e^2} \cdot \sum_{i= 1}^{B}\frac{1}{i} \geq \frac{n(n-B) \ln(B)}{64e^2}.
 \end{align*}
Therefore, there exits a constant $c > 0$ such that
$E[g(x_{0})] \geq cn(n-B) \log B.$ The optimum has a potential value of $nB$; thus, we can find a lower bound on the optimization time by considering the time to find a potential value of at most $nB$. Let $T = \min\{t \geq 0 \mid g(x_t) \leq nB\}$. Note that $T$ may not be the time at which we find the optimum for the first time. From $n(n - B)\log B = \omega(n^2)$ we get, for $n$ large enough, that $E[g(x_{0})] > nB$, which implies that the expected optimization time is at least $E[T]$.

In order to show the lower bound on the drift, we consider two different cases, $|x_t|_1 =  B$ and $|x_t|_1 < B$ and show in both the cases drift is at most $10$. First, we examine the case where the algorithm has currently $B$ number of $1$s. For any $t$, let $A_t$ be the event that $|x_t|_1 = B$ and %$L_t$ be the event that $LO(X_t) \geq B - \frac{(n-B)}{c_1e}$,  $\Delta^l_t = LO(x_{t+1}) - LO(x_t)$,  $\Delta^0_t = |x_{t+1}|_1 - |x_t|_1$ and 
let $\Delta_t = g(x_t) - g(x_{t+1})$ and 
\begin{align*}
\Delta^s_t & = \sum_{i=LO(x_t)}^{B-1}\frac{n(n-B)}{32e^2(B-i)} - \sum_{i=LO(x_{t+1})}^{B-1}\frac{n(n-B)}{32e^2(B-i)}\\
& = \sum_{i=LO(x_t)}^{LO(x_{t+1}) - 1}\frac{n(n-B)}{32e^2(B-i)}.
\end{align*}

\begin{align*}
\begin{split}
\text{Then, } E[\Delta_t \bigm\vert A_t] &= n\cdot E\left[\frac{|x_t|_1}{B - LO(x_t) + 1} - \frac{|x_{t+1}|_1}{B - LO(x_{t+1}) + 1} \bigm\vert A_t \right]\\
&\qquad + E[\Delta^s_t \bigm\vert A_t]\\
&\leq n\cdot E\left[\frac{|x_t|_1 - |x_{t+1}|_1}{B - LO(x_{t+1}) + 1} \bigm\vert A_t\right] + E[\Delta^s_t \bigm\vert A_t].
\end{split}
\end{align*}
Now we calculate the bounds for all the required expectations in the above equation. 

%To do this we need an upper bound on the expected increase in the number of \LO value, given that the left-most $0$ is flipped, which we claim to be $2$ because of the following reason. Let $X_t = LO(x_{t+1}) - LO(x_t)$ along with the condition that the left-most $0$ is flipped \Acomment{Should I define an event $C_t$ which indicates left-most $0$ being flipped and talk about $E[X_t \mid C_t]$?}. From Lemma~\ref{lem:distributionInTail} we know that, for $i > LO(x_t)$, $Pr($the $i$th  bit of $x_t$ is $1) \leq \frac{1}{2}$, therefore the random variable $Y \sim Geo(\frac{1}{2})$ stochastically dominates $X_t$. This implies that the expected increase in the number of \LO is
% \begin{align*}
% E[X_t] \leq E[Y] = 2.
% \end{align*}

% Since $|x_t|_1 = B$, $|x_t|_1 - |x_{t+1}|_1 \leq B - LO(x_t)$ and in expectation $LO(x_{t+1}) - LO(x_t) \leq 2$, we have
% \begin{align*}
% \frac{B - LO(x_t)}{B - LO(x_{t+1}) + 1} &= \frac{B - LO(x_{t+1}) + LO(x_{t+1}) - LO(x_t)}{B - LO(x_{t+1})+1} \\
% &\leq 1 +  LO(x_{t+1}) - LO(x_t).
% \end{align*}

% And since the probability to flip any bit is $\frac{1}{n}$, if $\Delta_t^1 = \frac{|x_t|_1 - |x_{t+1}|_1}{B - LO(x_{t+1}) + 1} $ then we have 

% \begin{align*}
% n\cdot E\left[\Delta_t^1 \bigm\vert A_t \right]& \\
% &\leq n \cdot E[1 +  LO(x_{t+1}) - LO(x_t) \bigm\vert A_t]\\
% &\leq 3.
% \end{align*}

First we calculate a bound for $n\cdot E\left[\frac{|x_t|_1 - |x_{t+1}|_1}{B - LO(x_{t+1}) + 1} \bigm\vert A_t\right]$ by using the definition of the expectation. Let $I = \{0, \cdots, B - LO(x_t)\}$ and $J = \{-1, 0, \cdots, B - LO(x_{t+1}) - 1\}$. Then the possible values the random variable $|x_t|_1 - |x_{t+1}|_1$ can have are the values in $I$. And the possible values $B - LO(x_{t+1}) + 1$ can have are $\{B - LO(x_{t}) - j \mid j \in J\}$. For $i \in \{1, \cdots, B - LO(x_t)\}$, the probability $Pr((|x_t|_1 - |x_{t+1}|_1 = i) \cap (B - LO(x_{t+1}) + 1 = B - LO(x_{t}) + 1)) \leq \binom{B - LO(x_t)}{i} \left(\frac{1}{n}\right)^{i} \leq \frac{B - LO(x_t)}{i!n}$  and for $i \in \{1, \cdots, B - LO(x_t)\}$ and $j \in \{0, \cdots, B - LO(x_{t+1}) - 1\}\}$, the probability $Pr((|x_t|_1 - |x_{t+1}|_1 = i) \cap (B - LO(x_{t+1}) + 1 = B - LO(x_{t}) - j)) \leq \binom{B - LO(x_t)}{i} \left(\frac{1}{n}\right)^{i+1}\frac{1}{2^{j}} \leq \frac{B - LO(x_t)}{i!n^2} \frac{1}{2^{j}} $ (see Lemma \ref{lem:distributionInTail}). For $i \in I$ and $j \in J$, let $p_{t}^{ij} = Pr((|x_t|_1 - |x_{t+1}|_1 = i) \cap (B - LO(x_{t+1}) + 1 = B - LO(x_{t}) - j))$ and $K = J \setminus \{B - LO(x_{t}) + 1\}$ and $K^c = \{B - LO(x_{t}) + 1\}$. Then, $ n\cdot E\left[\frac{|x_t|_1 - |x_{t+1}|_1}{B - LO(x_{t+1}) + 1} \bigm\vert A_t\right] $

\begin{align*}
&= n \cdot \sum_{i \in I} \sum_{j \in J} \frac{i \cdot p_{t}^{ij}}{B - LO(x_{t}) - j} \\
&= n \cdot \sum_{i \in I} \sum_{j \in K^c} \frac{i \cdot p_{t}^{ij}}{B - LO(x_{t}) - j} + n \cdot \sum_{i \in I} \sum_{j \in K} \frac{i \cdot p_{t}^{ij}}{B - LO(x_{t}) - j} \\
&= \sum_{i \in I} \frac{i (B - LO(x_t))}{i! (B - LO(x_{t}) + 1)} + \sum_{i \in I} \frac{i (B - LO(x_t))}{i!n} \sum_{j \in K} \frac{\frac{1}{2^{j}} }{B - LO(x_{t}) - j} \\
& \leq \sum_{i \in I \setminus \{0\}} \frac{1}{(i - 1)!} + \sum_{i \in I}  \frac{1}{(i - 1)!} \sum_{j \in K} \frac{1}{2^{j}} \\
&\leq e + 2e = 3e \leq 9. \numberthis \label{eqn1}
\end{align*}

We used the infinite sum values $\sum_{i = 1}^{\infty} \frac{1}{(i - 1)!} = e$, $\sum_{i = 0}^{\infty} \frac{1}{2^i} = 2$, to bound our required finite sums in the above calculation.

Now, we calculate $E[\Delta^s_t \bigm\vert A_t]$, to get an upper bound for $E[\Delta_t \bigm\vert A_t]$. When $|x_t|_1 = B$, the probability to gain in the \LO-values  is at most $\frac{B- LO(x_t)}{n} \cdot \frac{1}{n}$. Therefore we have

\begin{align*}
%&= \frac{n(n-B)}{32e^2} \cdot E\left[\sum_{i=LO(x_t)}^{B-1}\frac{1}{B-i} - \sum_{i=LO(x_{t+1})}^{B-1}\frac{1}{B-i} \bigm\vert A_t \right]\\
E[\Delta^s_t \bigm\vert A_t] &= \frac{n(n-B)}{32e^2} \cdot E\left[\sum_{i=LO(x_{t})}^{LO(x_{t+1}) - 1}\frac{1}{B-i} \bigm\vert A_t \right]\\
&\leq \frac{n(n-B)}{32e^2} \cdot E\left[\frac{ LO(x_{t+1}) - LO(x_t)}{B - LO(x_{t+1}) + 1} \bigm\vert A_t \right].\\ \numberthis \label{expectation0}
\end{align*}
We calculate an upper bound for $E\left[\frac{ LO(x_{t+1}) - LO(x_t)}{B - LO(x_{t+1}) + 1} \bigm\vert A_t \right]$. The probability that $LO(x_{t+1}) - LO(x_t) = i$ given that we gain at least a leading one is the probability that next $i - 1$ bits after left-most $0$ bit) is $1$ followed by a $0$ bit. This implies that the probability that $LO(x_{t+1}) - LO(x_t) = i$ given that we gain at least a leading one is at most $\frac{1}{2^{i - 1}}$. Therefore, we have $E\left[\frac{ LO(x_{t+1}) - LO(x_t)}{B - LO(x_{t+1}) + 1} \bigm\vert A_t \right]$
\begin{align*}
 \leq \frac{B- LO(x_t)}{n} \cdot \frac{1}{n} \cdot \sum_{i = 1}^{B - LO(x_t) - 1} \frac{i \cdot 2^{1 - i}}{B - LO(x_{t}) - i}. \numberthis \label{expectation} 
\end{align*}
Equations \ref{expectation0} and \ref{expectation} imply that, $E[\Delta^s_t \bigm\vert A_t] $
\begin{align*}
&\leq \frac{n(n-B)}{32e^2} \cdot \frac{B- LO(x_t)}{n} \cdot \frac{1}{n} \cdot \sum_{i = 1}^{B - LO(x_t) - 1} \frac{i \cdot 2^{1 - i}}{B - LO(x_{t}) - i}\\
&\leq \frac{1}{16e^2} \cdot \sum_{i = 1}^{B - LO(x_t) - 1} \frac{(B - LO(x_t) )\cdot i}{(B - LO(x_{t}) - i) \cdot 2^{i}}\\
&= \frac{1}{16e^2} \cdot \sum_{i = 1}^{B - LO(x_t) - 1} \frac{(B - LO(x_t) - i + i)\cdot i}{(B - LO(x_{t}) - i) \cdot 2^{i}}\\
&\leq \frac{1}{16e^2} \left(\sum_{i = 1}^{B - LO(x_t) - 1} \frac{i}{2^{i}} + \sum_{i = 1}^{B - LO(x_t) - 1} \frac{i^2}{2^{i}}\right)\\
&\leq  \frac{1}{16e^2} (2 + 6) =  \frac{1}{2e^2} \leq 1. \numberthis \label{eqn2}
\end{align*}

We used the infinite sum values $\sum_{i = 1}^{\infty} \frac{i}{2^i} = 2$, $\sum_{i = 0}^{\infty} \frac{i^2}{2^i} = 6$, to bound our required finite sums in the above calculation.

 From Equations~\ref{eqn1} and~\ref{eqn2}, we have $E[\Delta_t \bigm\vert A_t] \leq 10$ which concludes the first case (when $|x_t|_1 = B$). Next we calculate the bound for the drift conditioned on the event $A_t^c$ (when $|x_t|_1 < B$).

\begin{align*}
\begin{split}
E[\Delta_t \bigm\vert A_t^c] &= n\cdot E\left[\frac{|x_t|_1}{B - LO(x_t) + 1} - \frac{|x_{t+1}|_1}{B - LO(x_{t+1}) + 1} \bigm\vert A_t^c \right]\\
&\qquad + E[\Delta^s_t \bigm\vert A_t^c]\\
&\leq n\cdot E\left[\frac{|x_t|_1 - |x_{t+1}|_1}{B - LO(x_{t+1}) + 1} \bigm\vert A_t^c\right] + E[\Delta^s_t \bigm\vert A_t^c].
\end{split}
\end{align*}
Similar to the previous case, for this case also we start by finding a bound for $n\cdot E\left[\frac{|x_t|_1 - |x_{t+1}|_1}{B - LO(x_{t+1}) + 1} \bigm\vert A_t^c\right]$. Let $\Delta_t^1 = |x_t|_1 - |x_{t+1}|_1$. Then $n\cdot E\left[\frac{\Delta_t^1}{B - LO(x_{t+1}) + 1} \bigm\vert A_t^c\right]$

\begin{align*}
& =  n\cdot E\left[\frac{\Delta_t^1}{B - LO(x_{t+1}) + 1} \bigm\vert A_t^c, \Delta_t^1> 0\right] \cdot Pr(\Delta_t^1 > 0) \\ 
\begin{split}
&\qquad + n\cdot E\left[\frac{\Delta_t^1}{B - LO(x_{t+1}) + 1} \bigm\vert A_t^c, \Delta_t^1 < 0\right] \cdot Pr(\Delta_t^1 < 0).\\
\end{split}
\end{align*}
Now we find upper bounds for both the quantities in the above equation. By doing calculations similar to the calculations which lead to the Equation (\ref{eqn1}), we get $n\cdot E\left[\frac{\Delta_t^1}{B - LO(x_{t+1}) + 1} \bigm\vert A_t^c, \Delta_t^1> 0\right] \leq 9$. Since there are at least $n - B$ number of $0$ bits, the probability to gain a $1$ bit is at least $\frac{n - B}{en}$. And the probability that $LO(x_t) = LO(x_{t+1})$ is at least $\frac{1}{2e}$, for $n$ large enough. Therefore, $n\cdot E\left[\frac{\Delta_t^1}{B - LO(x_{t+1}) + 1} \bigm\vert \Delta_t^1 < 0\right] \cdot Pr(\Delta_t^1 < 0) \leq -\frac{(n - B)}{2e^2(B - LO(x_{t}) + 1)}$. By combining these two bounds we have
\begin{align*}
n\cdot E\left[\frac{\Delta_t^1}{B - LO(x_{t+1}) + 1} \bigm\vert A_t^c\right] \leq 9 - \frac{(n - B)}{2e^2(B - LO(x_{t}) + 1)}.  \numberthis \label{eqn3}
\end{align*}

Next we calculate $E[\Delta^s_t \bigm\vert A_t^c]$, to get an upper bound for $E[\Delta_t \bigm\vert A_t^c]$. When $|x_t|_1 < B$, the probability to gain in \LO-value is at most $\frac{1}{n}$. Therefore,

\begin{align*}
% &= \frac{n-B}{32e^2} \cdot E\left[\sum_{i=LO(x_t)}^{B-1}\frac{1}{B-i} - \sum_{i=LO(x_{t+1})}^{B-1}\frac{1}{B-i} \bigm\vert A_t^c \right]\\
E[\Delta^s_t & \bigm\vert A_t^c] \leq \frac{n(n-B)}{32e^2} \cdot E\left[\sum_{i=LO(x_{t})}^{LO(x_{t+1}) - 1}\frac{1}{B-i} \bigm\vert A_t^c \right]\\
&\leq \frac{n(n-B)}{32e^2} \cdot E\left[\frac{LO(x_{t+1}) - LO(x_t)}{B - LO(x_{t+1}) + 1} \bigm\vert A_t^c \right]\\
&\leq \frac{n(n-B)}{32e^2} \cdot \frac{1}{n} \cdot \sum_{i = 1}^{B - LO(x_t) - 1} \frac{i}{(B - LO(x_{t}) - i)\cdot 2^{i - 1}}\\
&= \frac{n-B}{32e^2(B - LO(x_t))}\cdot \sum_{i = 1}^{B - LO(x_t) - 1} \frac{(B - LO(x_t)) \cdot  i}{(B - LO(x_{t}) - i)\cdot 2^{i - 1}}\\
&= \frac{n - B}{16e^2(B - LO(x_t))} \cdot \sum_{i = 1}^{B - LO(x_t) - 1} \frac{(B - LO(x_t) - i + i)\cdot i}{(B - LO(x_{t}) - i) \cdot 2^{i}}\\
&\leq \frac{n - B}{16e^2(B - LO(x_t))} \left(\sum_{i = 1}^{B - LO(x_t) - 1} \frac{i}{2^{i}} + \sum_{i = 1}^{B - LO(x_t) - 1} \frac{i^2}{2^{i}}\right)\\
&\leq  \frac{n - B}{16e^2(B - LO(x_t))} (2 + 6) =  \frac{n - B}{2e^2(B - LO(x_t))}. \numberthis \label{eqn4}
\end{align*}
Since $B - LO(x_t) \geq 1$, we have $\frac{n - B}{2e^2(B - LO(x_t))} \leq \frac{n - B}{e^2(B - LO(x_t) + 1)}$. From Equations \ref{eqn3} and \ref{eqn4},we have

\begin{align*}
E[\Delta_t \bigm\vert A_t]  &\leq  9 - \frac{(n - B)}{2e^2(B - LO(x_{t}) + 1)} + \frac{n - B}{2e^2(B - LO(x_t))} \leq 10.
\end{align*}

Which concludes the second case (when $|x_t|_1 < B$). Now we have $E[\Delta_t \mid g(x_t)] \leq 10$. Therefore, by the lower bounding additive drift theorem \cite[Theorem~5]{ad},
$$E[T] \geq \frac{E[g(x_{0})] - nB}{10} = \Omega(n(n-B)\log B).$$
\end{proof}

% \begin{lem}\label{chernfbound}
% Let $n, B \in \natnum$ and let $B \geq \frac{4n}{5}$. And let $x$ be the search point at some iteration of the \ooea optimizing \LO with the cardinality constraint $B$. Then $Pr(|x|_1 = B \bigm\vert B/2 \leq LO(x) \leq B - 2(n - B)) \leq e^{\frac{-3}{4}}$.
% \end{lem}
% \begin{proof}
% We prove this lemma using the multiplicative Chernoff bounds.      
% \end{proof}

\begin{cor}\label{cor:RuntimeOnConstrainedLO}
Let $n, B \in \natnum$. Then the expected optimization time of the \ooea on the \LO with cardinality constraint $B$ is $\Theta\left(n^2+n(n-B)\log B\right).$
\end{cor}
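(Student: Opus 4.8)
The plan is to obtain the claimed $\Theta$-bound simply by matching the upper bound already established in Theorem~\ref{thm:ub} against a lower bound assembled from Theorems~\ref{thm:lbn2} and~\ref{thm:lb}. Theorem~\ref{thm:ub} gives exactly $O(n^2 + n(n-B)\log B)$, so the only work remaining is to argue the corresponding $\Omega(n^2 + n(n-B)\log B)$ lower bound. The guiding observation is that for any two nonnegative quantities $a$ and $b$ one has $a + b = \Theta(\max\{a,b\})$; hence it suffices to lower bound whichever of the two terms $n^2$ and $n(n-B)\log B$ dominates, and I would proceed by a case distinction on this.

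First I would treat the regime $n(n-B)\log B = O(n^2)$. Here $n^2 + n(n-B)\log B = \Theta(n^2)$, and Theorem~\ref{thm:lbn2}, which holds unconditionally for all $n, B \in \natnum$, directly supplies the matching $\Omega(n^2)$ lower bound. In the complementary regime $n(n-B)\log B = \omega(n^2)$, we instead have $n^2 + n(n-B)\log B = \Theta(n(n-B)\log B)$. This is precisely the hypothesis under which Theorem~\ref{thm:lb} applies, so it yields the matching $\Omega(n(n-B)\log B)$ bound. Since these two regimes are complementary and exhaust all relationships between $B$ and $n$, combining them gives $\Omega(n^2 + n(n-B)\log B)$ in full generality.

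Together with the upper bound of Theorem~\ref{thm:ub}, the two-sided estimate then closes to $\Theta(n^2 + n(n-B)\log B)$, as claimed.

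I do not anticipate a genuine obstacle here, since the argument is a bookkeeping combination of three previously proved results rather than a fresh analysis. The only point demanding care is that the two lower bounds hold under different (complementary) conditions on $B$, so I would state explicitly that the case split is exhaustive and verify in each regime that the applicable lower bound really does match the dominant term of the upper bound; the identity $a+b = \Theta(\max\{a,b\})$ is what makes this matching automatic once the dominant term is identified.
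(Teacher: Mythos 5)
Your proposal is correct and follows essentially the same route as the paper: the paper's proof of Corollary~\ref{cor:RuntimeOnConstrainedLO} likewise just combines the upper bound of Theorem~\ref{thm:ub} with the lower bounds of Theorems~\ref{thm:lbn2} and~\ref{thm:lb}. Your explicit case split on which of $n^2$ and $n(n-B)\log B$ dominates, together with the observation that $a+b=\Theta(\max\{a,b\})$, merely spells out the bookkeeping that the paper leaves implicit.
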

\begin{proof}
From Theorem \ref{thm:lbn2} and Theorem \ref{thm:lb} we have the required lower bound and we have the upper bound from Theorem \ref{thm:ub}. Therefore the expected optimization time is $\Theta\left(n^2+n(n-B)\log B\right).$ 
\end{proof}

\section{Better Run Times} \label{sec:sec4}

In this section we discuss two ways to obtain the (optimal) run time of $O(n^2)$.
First, we state a corollary to the proof of Theorem \ref{thm:ub}, that we can almost reach the bound within $O(n^2)$ iterations.
\begin{cor} \label{cor:cor1}
% Let $n, B \in \natnum$ and $B < \frac{4n}{5}$. Then the \ooea optimizing \LO with the cardinality constraint $B$ finds a search point with $B - \frac{n - B}{2}$ number of \LO within $O(n^2)$ in expectation. 
Let $n, B \in \natnum$ and $c > 0$. Then the \ooea on \LO with the cardinality constraint $B$ finds a search point with $B - c(n - B)$ leading $1$s within $O(n^2)$ in expectation.
\end{cor}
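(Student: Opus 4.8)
The plan is to reuse verbatim the potential function $f$ and the per-step drift estimate from the proof of Theorem~\ref{thm:ub}, changing only the target at which we stop. Write $i^{*} = \max\{0, \lceil B - c(n-B)\rceil\}$; if $i^{*}=0$ the claim is trivial since every bit string has at least $0 = B-c(n-B)$ leading ones, so assume $1 \le i^{*} < B$ (here $i^{*}<B$ holds because $c>0$ and $B<n$). Let $T^{*} = \min\{t \ge 0 : LO(X_t) \ge i^{*}\}$. The crucial point is that the drift computation of Theorem~\ref{thm:ub} establishes $E[f(X_{t+1}) - f(X_t) \mid \mathcal F_t] \ge 1$ at \emph{every} state with $LO(X_t) < B$, hence in particular at every state visited strictly before $T^{*}$. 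Summing this inequality along the stopped process (an optional-stopping argument, justified since $f$ is bounded by $g_B(B)$) yields $E[T^{*}] \le E[f(X_{T^{*}})] - f(X_0) \le E[f(X_{T^{*}})]$, reducing the whole claim to bounding the potential reached when $LO$ first attains $i^{*}$.

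First I would bound the ``on-target'' part $g_B(i^{*})$. Telescoping exactly as in Theorem~\ref{thm:ub} gives
\begin{align*}
g_B(i^{*}) &= en\, i^{*} + \bigl(e^{2} n(n-B) + en\bigr)\sum_{i=1}^{i^{*}} \frac{1}{B-i+1}\\
&= en\, i^{*} + \bigl(e^{2} n(n-B) + en\bigr)\bigl(H_B - H_{B-i^{*}}\bigr),
\end{align*}
where $H_m = \sum_{i=1}^{m} 1/i$. Since $B - i^{*} = c(n-B)$, the harmonic difference is $H_B - H_{c(n-B)} = O\bigl(\log\tfrac{B}{c(n-B)}\bigr)$, so the dominant contribution is $e^{2} n(n-B)\log\tfrac{B}{c(n-B)}$, while $en\,i^{*} \le en B = O(n^{2})$ and $en\bigl(H_B - H_{c(n-B)}\bigr) = O(n\log n)$.

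The heart of the argument is the elementary estimate showing that truncating the harmonic sum at $c(n-B)$ (rather than at $1$, as for the full optimum) collapses the dominant term to $O(n^{2})$. Writing $m = n-B$ and using that $m\log(n/m) \le n/e$ (the map $m \mapsto m\log(n/m)$ is maximised at $m = n/e$), I get
\[
n(n-B)\log\tfrac{B}{c(n-B)} \le nm\log\tfrac{n}{cm} = n\bigl(m\log\tfrac nm + m\log\tfrac1c\bigr) = O(n^{2}),
\]
the final step absorbing the constant $c$ since $m\log(1/c) \le n\log(1/c) = O(n)$ (and the term is nonpositive when $c \ge 1$). Hence $g_B(i^{*}) = O(n^{2})$.

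It remains to control the overshoot $E[f(X_{T^{*}})] - g_B(i^{*})$ incurred on the single step that crosses the threshold. Because we stop at distance $B - i^{*} = c(n-B)$ from the bound, every per-level potential increment near $i^{*}$ is of order at most $e^{2}n(n-B)/(c(n-B)) = O(n)$, and by Lemma~\ref{lem:distributionInTail} the number of leading ones gained in one step is stochastically dominated by a geometric variable (each tail bit is $1$ with probability at most $1/2$), hence $O(1)$ in expectation. This gives $E[f(X_{T^{*}})] \le g_B(i^{*}) + O(n) = O(n^{2})$ and therefore $E[T^{*}] = O(n^{2})$. I expect this overshoot bound to be the main technical obstacle: one must verify that a single crossing step cannot carry the potential up into the expensive region close to $B$, which is precisely where the $n(n-B)\log B$ cost of full optimisation lives. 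The gap $c(n-B)$ to the bound is exactly what keeps each increment at $O(n)$ and the expected jump at $O(1)$, so the cost saved is the entire difference between $\log B$ and $\log\tfrac{B}{c(n-B)}$.
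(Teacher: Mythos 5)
Your proposal matches the paper's own argument: the appendix proof of this corollary likewise reuses the potential function and unit drift from Theorem~\ref{thm:ub}, stops at the first time $LO(x_t)$ reaches $B - c(n-B)$, and collapses the truncated harmonic sum via $(n-B)\ln\frac{B}{c(n-B)} \leq B/c = O(n)$ (you use $m\log(n/m) \leq n/e$, an equivalent elementary estimate). The only difference is that you explicitly bound the overshoot $E[f(X_{T^*})] - g_B(i^*)$ at the crossing step, a point the paper passes over silently; your $O(n)$ estimate for it is correct and makes the argument slightly more complete.
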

% \begin{proof}
% Let $x_t$ be the search point at the $t^{th}$ iteration of the \ooea  optimizing \LO with the cardinality constraint $B$ and $T_1 = \min\{t \geq 0 \bigm\vert LO(x_t) \geq B - c(n - B)\}$. And let $k = B - c(n - B)$. Then, from the proof of Theorem \ref{thm:ub} we know that
% \begin{align*}
% E[T_1] &\leq g_{<B}(k) \leq \frac{enB}{2} + (en + e^2 n(n-B))\left(1 + \ln(\frac{B}{c(n - B)})\right).\\
% \end{align*}
% % Since $B < \frac{4n}{5}$, we have $n - B > \frac{n}{5}$ and $\frac{2B}{n - B} \leq 8$. Therefore,
% Since $\ln{y} \leq y$ for any $y > 0$, we have $(n - B)\left(\ln(\frac{B}{c(n - B)})\right) \leq \frac{B}{c} = O(n)$. Therefore, $E[T_1] = O(n^2).$
% \end{proof}

With the next theorem we show that incorporating the number of $0$s of a bit string as a secondary objective gives an  expected run time of the \ooea of $\Theta(n^2)$ to optimize cardinality constrained \LO.
\begin{thm}\label{thm:improved_algo}
 Let $B \leq n - 1$ and for any $x \in \bitstring$, let
 
 \[f(x) = \begin{cases} 
      (LO(x), |x|_0) & |x|_1 \leq B, \\
     - |x|_1 & \text{otherwise.}
   \end{cases}
\]
Then \ooea takes $\Theta(n^2)$ in expectation to optimize $f$ in the lexicographic order with the cardinality constraint $B$. %\\
\end{thm}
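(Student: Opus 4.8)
The plan is to prove the two matching bounds separately: the $\Omega(n^2)$ lower bound by transferring the fitness-level argument behind \reft{thm:lbn2}, and the $O(n^2)$ upper bound by a direct level-by-level accounting that exploits the secondary objective to avoid the costly two-bit flips at the constraint boundary. For the lower bound I would reuse the \emph{fitness level method with visit probabilities} of \cite[Theorem~8]{doerr2021lower} verbatim, partitioning the feasible search space by \LO-value. The bound $p_i \le \frac1n(1-\frac1n)^i$ on the probability of leaving level $i$ still holds, since raising \LO still forces flipping the critical bit $i+1$ while preserving the first $i$ bits, regardless of the secondary objective. The only step needing care is the analogue of \refl{lem:distributionInTail}, whose inductive proof rests on $a\le b$, where $a$ (resp.\ $b$) is the acceptance probability of an offspring turning a tail $0$ into a $1$ (resp.\ a tail $1$ into a $0$); under the lexicographic objective the secondary criterion rewards extra $0$s, so clearing a tail $1$ is accepted at least as often as planting one, giving $a\le b$ and hence $\Pr(x_i^t=1)\le 1/2$ for tail bits. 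This keeps $v_i\ge 1/2$ and yields $E[T]\ge \frac n2\sum_{i=0}^{B-1}(1-\frac1n)^{-i}=\Omega(n^2)$.

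For the upper bound I would first record two monotonicity facts. Any \LO-decreasing move lowers the primary objective and any infeasible offspring scores below every feasible point, so \LO is non-decreasing along the run and the process never re-enters the infeasible region once feasible. Within a fixed \LO-level, an accepted move may not decrease $|x|_0$, i.e.\ may not increase $|x|_1$; hence $|x|_1$ is non-increasing while \LO stays fixed, and once the search point drops below the bound at a level it remains below the bound until \LO grows. Reaching feasibility from the random start costs $O(n\log n)$, being \OneMax-minimization on $|x|_1$ while infeasible.

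I would then bound the expected time spent at each level $i\in\{0,\dots,B-1\}$. If the current point sits at the bound ($|x|_1=B$), it has $B-i$ tail $1$s; flipping any single one of them to $0$ (accepted, since it only adds a $0$ at the same level) moves the point below the bound with probability at least $\frac{B-i}{en}$, hence in expected time at most $\frac{en}{B-i}$. Once below the bound, flipping the critical bit $i+1$ alone raises \LO with probability at least $\frac1{en}$, hence in expected time at most $en$. Since \LO never decreases, each level is occupied in a single contiguous interval, so summing the per-level bounds gives
\[
E[\text{time once feasible}] \le \sum_{i=0}^{B-1}\left(\frac{en}{B-i}+en\right) = en\,H_B + enB = O(n\log B + nB) = O(n^2),
\]
and the $O(n\log n)$ feasibility phase leaves the total at $O(n^2)$. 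Note that reaching \LO-value $B$ feasibly already forces $|x|_1=B$ with an all-zero tail, i.e.\ the optimum, so no separate tail-clearing step at level $B$ is required.

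The main obstacle is the upper-bound bookkeeping at the constraint boundary: one must argue rigorously that the expensive ``flip the critical bit and simultaneously clear a tail $1$'' double flip of the unmodified setting is here replaced by the cheaper sequential ``clear a tail $1$, then flip the critical bit'', and that the within-level non-increase of $|x|_1$ prevents the walk from drifting back to the bound and re-incurring the cost. Deriving this monotonicity carefully from the lexicographic acceptance rule, and confirming that each level contributes only the contiguous cost $\frac{en}{B-i}+en$ rather than the amortized harmonic blow-up that produced the extra $n(n-B)\log B$ term in \reft{thm:ub}, is exactly where the saving lies and is the crux of the argument.
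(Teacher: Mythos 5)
Your proposal is correct, but it reaches the $O(n^2)$ upper bound by a genuinely different route than the paper. The paper defines the scalar potential $g(x)=3e\,LO(x)+|x|_0$, checks that accepted steps never decrease it, establishes a drift of at least $1/(en)$ both at the boundary $|x|_1=B$ (via the single tail-$1$ flip) and below it (via the critical-bit flip), and applies the additive drift theorem against the target value $3eB+n-B$. You instead perform a level-by-level waiting-time decomposition over the \LO-values, resting on exactly the two monotonicity facts you isolate (\LO is non-decreasing, and $|x|_1$ is non-increasing while \LO is fixed under the lexicographic rule), so that each level $i$ costs at most $\frac{en}{B-i}+en$ and the sum is $O(n\log B+nB)=O(n^2)$. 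Both arguments are sound and exploit the same mechanism (``shed a tail $1$ first, then flip the critical bit''); yours is more elementary, makes the mechanism explicit, and actually yields the finer bound $O(nB+n\log B)$ for the phase after feasibility, whereas the paper's drift computation is more compact and sidesteps the (correct, but worth stating carefully) claim that each level is occupied in a single contiguous interval. On the lower bound, the paper simply writes that it ``follows from \reft{thm:lbn2}''; you are right that this transfer is not automatic, since the acceptance rule has changed and the visit-probability argument relies on the analogue of \refl{lem:distributionInTail}. Your observation that the secondary objective only makes clearing a tail $1$ easier to accept than planting one, so that $a\le b$ and $\Pr(x_i^t=1)\le 1/2$ persists, closes that gap more carefully than the paper does.
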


\begin{proof}
    For any $x \in \bitstring$, let
    $g(x) = 3eLO(x) + |x|_0,$
where $|x|_0$ represents the number of $0$s in $x$. Intuitively, we value both progress in decreasing the number of (unused) $1$s, as well as an increase in leading $1$s, but we value an increase in leading $1$s higher (since this is the ultimate goal, and typically comes at the cost of increasing the number of $1$ by a constant). Now we will show that $g(y) = 3eB + n - B $ if and only if $y$ is the optimum of $f$. Suppose for some $y \in \bitstring$, $g(y) = 3eB + n - B$. Then $3eLO(y) + |y|_0 =  3eB + n - B$, which implies that $3eLO(y) =  3eB + n - B - |y|_0$. Since $LO(y) \leq B$ and $|y|_0 \leq n - LO(y)$, $3eLO(y) =  3eB + n - B - |y|_0$ implies that $LO(y) =  B$. Therefore, $y$ is optimal. 

Let $T = \min \{t \geq 0 \bigm\vert g(x_t) \geq 3eB + n - B \}$. We will examine the drift at two different scenarios, $|x_t|_1 < B$ and $|x_t|_1 = B$ and show that in both the cases the drift is at least $1/n$. Let $\Delta_t = g(x_{t+1}) - g(x_t)$ and $A_t$ be the event that the left-most $0$ in $x_t$ is flipped. Then $E[\Delta_t \bigm\vert A_t^c] \geq 0$, because, if the number of \LO does not increase then $|x_{t+1}|_0 - |x_t|_0 \geq 0$ which in turn implies $\Delta_t \geq 0$. Therefore, for any $0 \leq  t < T$,
\begin{align*}
\begin{split}
E[\Delta_t \bigm\vert |&x_t|_1 < B] = E[\Delta_t \bigm\vert A_t, |x_t|_1 < B] \cdot Pr[A_t] \\ 
& \qquad + E[\Delta_t \bigm\vert A_t^c, |x_t|_1 < B] \cdot Pr[A_t^c]\\
& \geq \frac{1}{n} \cdot E[g(x_{t+1}) - g(x_t) \bigm\vert A_t, |x_t|_1 < B] + 0\\
%&= \frac{1}{n} \cdot E[(3eLO(x_{t+1}) - 3eLO(x_t) + |x_{t+1}|_0 - |x_t|_0) \bigm\vert A_t, |x_t|_1 < B]\\
&= \frac{1}{n} \cdot 3eE[LO(x_{t+1}) - LO(x_t) \bigm\vert A_t, |x_t|_1 < B] \\
&\qquad + \frac{1}{n} \cdot E[|x_{t+1}|_0 - |x_t|_0 \bigm\vert A_t, |x_t|_1 < B].\\
\end{split}
\end{align*}

Note that $E[LO(x_{t+1}) - LO(x_t) \bigm\vert A_t, |x_t|_1 < B]$ is greater than or equal to the probability of not flipping any other bits, since it increases the number of \LO by at least one. And $E[|x_t|_0 - |x_{t+1}|_0 \bigm\vert A_t, |x_t|_1 < B])$ is upper bounded by the sum $1 + \sum\limits_{i = 1}^{|x_t|_0 - 1} Pr(\text{flipping the $i^{th}$ $0$ bit})$. This is because we lose one $0$ bit by flipping the left-most $0$ bit and we flip each other 0-bit independently with probability $\frac{1}{n}$. And $\frac{|x_t|_0 - 1}{n} \leq 1$, therefore,

\begin{align*}
E[\Delta_t \bigm\vert |x_t|_1 < B] &\geq \frac{1}{n} \left(3e\left(1- \frac{1}{n}\right)^{n-1} - \left(1 + \frac{|x_t|_0 - 1}{n}\right)\right) \geq \frac{1}{n}. 
\end{align*}

This concludes the first case. Now, lets consider the case $|x_t|_1 = B$. Let $D$ be the event that the mutation operator flips exactly one $1$ bit which lies after the left-most $0$ bit and flips no other bits. Since $|x_t|_1 = B$ and $LO(x_t) < B$, there is at least one such $1$ bit, which implies $E[|x_{t+1}|_0 - |x_t|_0 \bigm\vert |x_t|_1 = B, D] \geq 1$. Also note that $Pr(D) \geq \frac{1}{en}$. If a search point is accepted, then the number of $1$ bits is at most $B$ and the \LO value cannot decrease; thus, $LO(x_{t+1}) \geq LO(x_{t})$ and $|x_{t+1}|_0 \geq n-B$. Overall we have $g(x_{t+1}) = 3eLO(x_{t+1}) + |x_{t+1}|_0 \geq 3eLO(x_{t}) + n-B = g(x_t)$. Therefore, $E[\Delta_t \bigm\vert |x_t|_1 = B, D^c] \geq 0$ and

\begin{align*}
\begin{split}
  E[\Delta_t \bigm\vert |x_t|_1 = B] & =  E[|x_{t+1}|_0 - |x_t|_0 \bigm\vert |x_t|_1 = B, D] \cdot Pr(D)\\
  & \qquad + E[\Delta_t \bigm\vert |x_t|_1 = B, D^c] \cdot Pr(D^c)\\
  & \geq \frac{1}{en}.
\end{split}
\end{align*}

The expected number of $0$s in the initially selected uniform random bit string is $\frac{n}{2}$ and the expected number of \LO is at least zero, therefore  $E[g(x_0)] \geq \frac{n}{2}$. We have an drift of at least $\frac{1}{en}$ in both the cases,  therefore we get the required upper bound by the additive drift theorem \cite[Theorem~5]{ad}, $$E[T] 
    \leq en \cdot (3eB + n - B - E[g(x_0)]) \leq 3e^2nB + \frac{en^2}{2} - enB = O(n^2).$$

This proves the upper bound. And the lower bound follows from Theorem~\ref{thm:lbn2}.
\end{proof}

%%%%%%%%%%%%%%%%%%%%%%%%%%%%%%%%%%%%%%%%%%%%%%%%
%%%%%%%%%%%%%%%%%%%%%%%%%%%%%%%%%%%%%%%%%%%%%%%%
%%%%%%%%%%%%%%%%%%%%%%%%%%%%%%%%%%%%%%%%%%%%%%%%

\section{Empirical Analysis} \label{sec:experiments}

We want to extend our theoretical work on deterministic constraint the case of stochastic constraint models (as defined in Section~\ref{sc}). For the first model we use parameters $\mu = 1$ and $\sigma = 0.1$ and for the second model we use $\epsilon = \sqrt{3}$. Note that in the second model $U(B-\sqrt{3}, B+\sqrt{3})$ has variance $1$. For both the models we considered two different $B$ values 75 and 95 (also $B$ = 85 in the Appendix). As we will see, the \ooea struggles in these settings; in order to show that already a small parent population can remedy this, we also consider the $(10+1)$ EA in our experiments.

We use the following lemma for discussing certain probabilities in this section.
\begin{lem} \label{lem:erfc}
Let $k \in \{1, \cdots, B-1\}$, $x \in \{0,1\}^n, B \in [n]$, $W_x = \sum_{i = 1}^{n} x_i \cdot Y_i$ where $Y_i \sim N(1, \sigma^2)$ and $x_i$ be the $i-$th bit of $x$ and $|x|_1 \leq B - k$. Then $Pr(W_x > B) \leq \frac{1}{\sqrt{\pi}}e^{\frac{-k^2}{2n^2\sigma^2}}$ and $Pr(W_x > B \mid |x|_1 = B) =\frac{1}{2}$.
\end{lem}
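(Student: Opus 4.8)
The plan is to reduce $W_x$ to a one-dimensional Gaussian and then apply a standard tail bound. The key observation, which drives both halves of the statement, is that $W_x=\sum_{i=1}^n x_i Y_i$ depends on $x$ only through its number of ones: if $m:=|x|_1$, then $W_x$ is a sum of $m$ independent copies of $\normal{1,\sigma^2}$, and hence $W_x\sim\normal{m,\,m\sigma^2}$. Everything else is bookkeeping around this fact.

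For the second claim I would simply note that conditioning on $|x|_1=B$ gives $W_x\sim\normal{B,\,B\sigma^2}$, a Gaussian whose mean is exactly $B$. A non-degenerate normal variable exceeds its mean with probability $1/2$, so $Pr(W_x>B\mid |x|_1=B)=1/2$ follows immediately, using that $W_x$ is continuous and hence the boundary event $\{W_x=B\}$ has probability zero.

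For the first claim I would standardize. With $Z\sim\normal{0,1}$ we have $Pr(W_x>B)=Pr\!\big(Z>\tfrac{B-m}{\sigma\sqrt{m}}\big)$, which is trivially $0$ in the degenerate case $m=0$, so assume $m\ge 1$. The hypothesis $m\le B-k$ gives $B-m\ge k$, and $m\le B\le n$ gives $\sqrt{m}\le n$; I would deliberately bound these two quantities \emph{separately} (rather than analysing $(B-m)^2/m$ as a function of $m$), which sidesteps any monotonicity argument. This yields $\tfrac{B-m}{\sigma\sqrt m}\ge \tfrac{k}{\sigma n}$. Applying the Gaussian tail bound $Pr(Z>t)\le\tfrac12 e^{-t^2/2}$ (equivalently $\erfc(x)\le e^{-x^2}$) and then using $t\ge \tfrac{k}{\sigma n}$ to estimate the exponent gives $Pr(W_x>B)\le \tfrac12\exp\!\big(-\tfrac{k^2}{2n^2\sigma^2}\big)$, and since $\tfrac12\le\tfrac{1}{\sqrt\pi}$ this is at most $\tfrac{1}{\sqrt\pi}\exp\!\big(-\tfrac{k^2}{2n^2\sigma^2}\big)$, as claimed.

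Honestly, there is no deep obstacle here: once the reduction $W_x\sim\normal{m,m\sigma^2}$ is in hand, the rest is a routine tail estimate. The only points needing care are the deliberately loose steps that produce the stated (weak) bound — the crude estimate $\sqrt m\le n$ that turns $n$ into $n^2$ in the exponent, and the loose constant $\tfrac1{\sqrt\pi}$ — together with explicitly disposing of the degenerate $m=0$ case and invoking continuity of the Gaussian so that no edge case is left implicit.
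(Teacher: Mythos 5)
Your proof is correct and takes essentially the same route as the paper's: reduce $W_x$ to a one-dimensional Gaussian $\mathcal{N}(m, m\sigma^2)$ with $m=|x|_1$, get the conditional probability $\tfrac12$ from symmetry about the mean, and bound the tail via monotonicity in $m$ followed by an $\erfc$-type estimate. The only (immaterial) difference is that you use the elementary bound $\erfc(t)\leq e^{-t^2}$ together with $\tfrac12\leq\tfrac1{\sqrt\pi}$, whereas the paper invokes a sharper cited upper bound on $\erfc$ and then discards the extra factor anyway.
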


In Figure \ref{fig:ooea_single} we have a single sample run of \ooea on the first model. We observe that if the \ooea finds a bit string with $B$ number of $1$s it violates the constraint with probability $\frac{1}{2}$ (see Lemma~\ref{lem:erfc}) and accepts a bit string with a lower  number of \LO. This process keeps repeating whenever the \ooea encounters an individual with a number of $1$s closer to $B$.
\begin{figure}[htbp]
  \centering
  \begin{minipage}[b]{0.4\textwidth}
    \includegraphics[width=\textwidth]{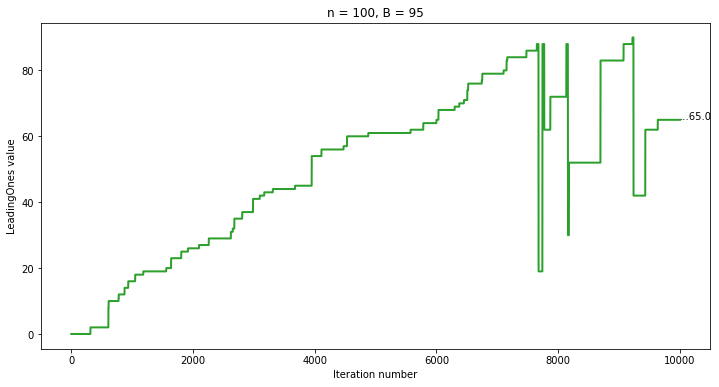}
    \caption{\ooea sample run with $n = 100$, $B = 85$ and $N(1, 0.1)$ chance constraint for $10000$ iteration.}
    \label{fig:ooea_single}
  \end{minipage}
\end{figure}

\begin{figure}[htbp]
  \centering
  \begin{minipage}[b]{0.4 \textwidth}
    \includegraphics[width=\textwidth]{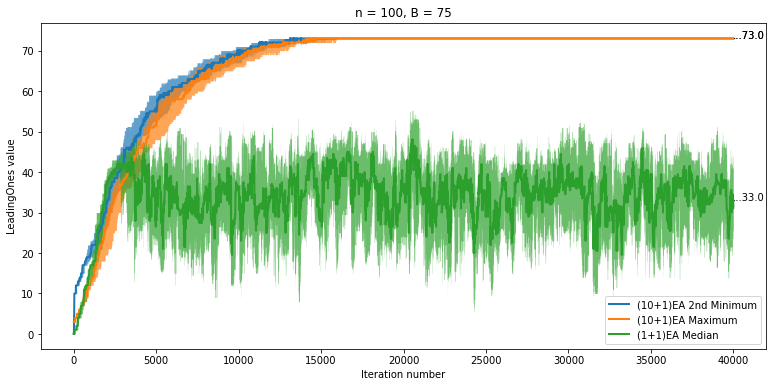}
    \caption{ (10+1) EA and \ooea on \LO with $n = 100$, $B = 75$ and $N(1, 0.1)$ chance constraint for $40000$ iterations.}
    \label{fig:B=75}
  \end{minipage}
\end{figure}

Figures \ref{fig:B=75} and \ref{fig:B=95} are about the first model in which we have the \LO-values of the best individual (bit string with the maximum fitness value) in each iteration of the (10+1) EA, the \LO values of the second-worst individuals (bit string with the second-smallest fitness value) in each iteration of the (10+1) EA and the \LO values at each iteration of the \ooea. Each curve is the median of thirty independent runs and the shaded area is the area between the $25-$th and the $75-$th quantile values. For all three $B$-values, after initial iterations, all the individuals except the worst individual in the (10+1) EA population have $B - 2$ number of leading $1$s. This is because, for this model, the probability that an individual with $B-2$ number of $1$s violates the constraint is at most $\frac{e^{-2}}{\sqrt{\pi}}$ (from Lemma~\ref{lem:erfc}).

% \begin{figure}[htbp]
%   \centering
%   \begin{minipage}[b]{0.4\textwidth}
%     \includegraphics[width=\textwidth]{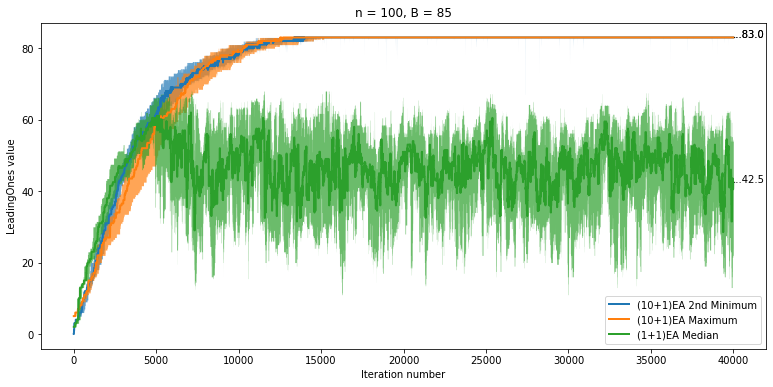}
%     \caption{ (10+1) EA and \ooea on \LO with $n = 100$, $B = 85$ and $N(1, 0.1)$ chance constraint for $40000$ iterations.}
%     \label{fig:B=85}
%   \end{minipage}
% \end{figure}

\begin{figure}[htbp]
  \centering
  \begin{minipage}[b]{0.4\textwidth}
    \includegraphics[width=\textwidth]{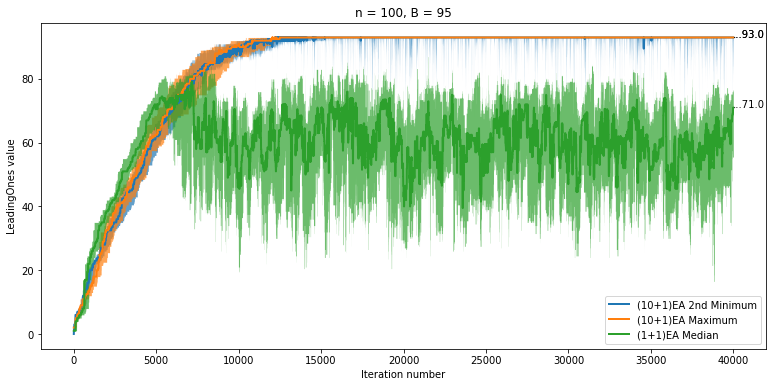}
    \caption{  (10+1) EA and \ooea on \LO with $n = 100$, $B = 95$ and $N(1, 0.1)$ chance constraint for $40000$ iterations.}
    \label{fig:B=95}
  \end{minipage}
\end{figure}

Figures \ref{fig:UB=75} and \ref{fig:UB=95} are about the second model and the curves represent the same things as in the previous figures but with respect to the second model. In these figures we can see that the best and the second worst individuals of the (10+1) EA are not the same because of the changing constraint values.

\begin{figure}[htbp]
  \centering
  \begin{minipage}[b]{0.4\textwidth}
    \includegraphics[width=\textwidth]{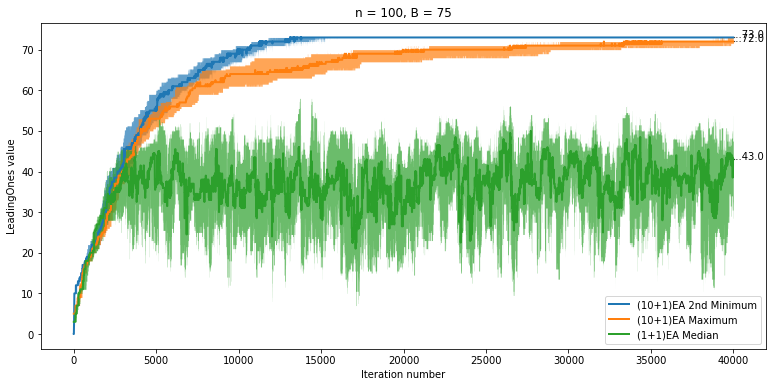}
    \caption{(10+1) EA and \ooea on \LO with $n = 100$, $B = 75$ and $U(B-\sqrt{3}, B+\sqrt{3})$ stochastic constraint for $40000$ iterations.}
    \label{fig:UB=75}
  \end{minipage}
\end{figure}

% \begin{figure}[htbp]
%   \centering
%   \begin{minipage}[b]{0.4\textwidth}
%     \includegraphics[width=\textwidth]{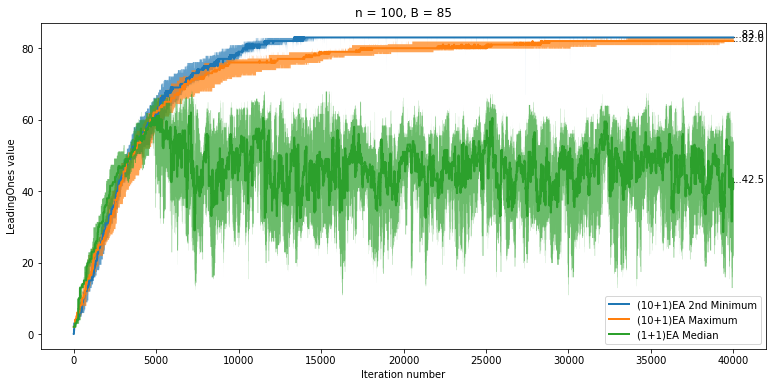}
%     \caption{(10+1) EA and \ooea on \LO with $n = 100$, $B = 85$ and $U(B-\sqrt{3}, B+\sqrt{3})$ stochastic constraint for $40000$ iterations.}
%     \label{fig:UB=85}
%   \end{minipage}
% \end{figure}

\begin{figure}[htbp]
  \centering
  \begin{minipage}[b]{0.4\textwidth}
    \includegraphics[width=\textwidth]{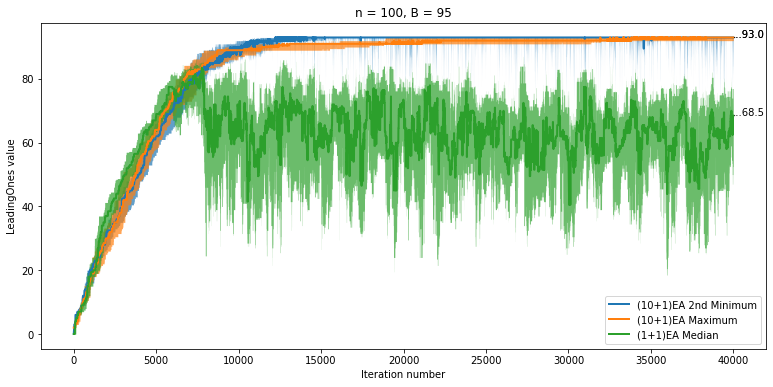}
    \caption{(10+1) EA and \ooea on \LO with $n = 100$, $B = 95$ and $U(B-\sqrt{3}, B+\sqrt{3})$ stochastic constraint for $40000$ iterations.}
    \label{fig:UB=95}
  \end{minipage}
\end{figure}

\section{Conclusions}
Understanding how evolutionary algorithms deal with constrained problems is an important topic of research. We investigated the classical LeadingOnes problem with additional constraints. For the case of a deterministic uniform constraint we have carried out a rigorous run time analysis of the (1+1)~EA which gives results on the expected optimization time in dependence of the chosen constraint bound. Afterwards, we examined stochastic constraints and the use of larger populations for dealing with uncertainties. Our results show a clear benefit of using the $(10+1)$~EA instead of the $(1+1)$~EA. We regard the run time analysis of population-based algorithms for our examined settings of stochastic constraints as an important topic for future work.

\section{Acknowledgements}
Frank Neumann has been supported by the Australian Research Council (ARC) through grant FT200100536. Tobias Friedrich and Timo Kötzing were supported by the German Research Foundation (DFG) through grant FR 2988/17-1.

\bibliographystyle{ACM-Reference-Format}
\bibliography{main.bib}

\appendix
\section{Appendix}
\begin{cor} \label{cor:cor1-appendix}
% Let $n, B \in \natnum$ and $B < \frac{4n}{5}$. Then the \ooea optimizing \LO with the cardinality constraint $B$ finds a search point with $B - \frac{n - B}{2}$ number of \LO within $O(n^2)$ in expectation. 
Let $n, B \in \natnum$ and $c > 0$. Then the \ooea on \LO with the cardinality constraint $B$ finds a search point with $B - c(n - B)$ leading $1$s within $O(n^2)$ in expectation.
\end{cor}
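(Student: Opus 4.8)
The plan is to read the statement directly off the drift computation in the proof of Theorem~\ref{thm:ub}: reaching only the \LO-value $i^\ast := \ceil{B-c(n-B)}$ (if $i^\ast \le 0$ the claim is vacuous, since every bit string has nonnegative \LO-value) requires the potential to climb to $g_B(i^\ast)$ rather than all the way to $g_B(B)$, and I will show $g_B(i^\ast)=O(n^2)$. First I would telescope the recursions $g_B(i)-g_{<B}(i-1)=en\bigl(1+\tfrac{e(n-B)}{B-i+1}\bigr)$ and $g_{<B}(i)-g_B(i)=\tfrac{en}{B-i}$ to obtain
\[
g_B(i) = en\,i + \bigl(e^2 n(n-B)+en\bigr)\sum_{k=B-i+1}^{B}\frac{1}{k}.
\]
Bounding the harmonic tail by $\sum_{k=B-i+1}^B \tfrac1k \le \tfrac{i}{B-i+1}$ and using that $B-i^\ast+1 \ge c(n-B)$ makes the dangerous factor $(n-B)$ cancel: $g_B(i^\ast) \le en^2 + \bigl(e^2 n(n-B)+en\bigr)\cdot \tfrac{n}{c(n-B)} = O(n^2)$, with the hidden constant depending on $c$. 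This cancellation is precisely why stopping short of the bound improves the $n(n-B)\log B$ term of Theorem~\ref{thm:ub} down to $O(n^2)$: the blow-up term $e^2 n(n-B)/(B-i+1)$ never exceeds a constant below level $i^\ast$.

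Next I would convert this into a run time bound by running the drift argument only until \LO reaches $i^\ast$. By \cite[Lemma~3]{lo} the \ooea reaches a feasible search point within $O(n\log(n/B))=O(n^2)$ iterations, so I start the analysis from the first feasible individual. Writing $f$ for the potential from the proof of Theorem~\ref{thm:ub}, set $M:=g_B(i^\ast)$, $\hat f := \min(f,M)$, and $T^\ast := \min\{t : LO(X_t)\ge i^\ast\}$. Because the potential values are strictly increasing along the level order $g_B(0)<g_{<B}(0)<g_B(1)<g_{<B}(1)<\cdots$, a state with $LO<i^\ast$ has $f<M$ (so $\hat f=f$) and a state with $LO\ge i^\ast$ has $f\ge g_B(i^\ast)=M$ (so $\hat f=M$); hence $M-\hat f(X_t)=0$ precisely when $t\ge T^\ast$. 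Applying the additive drift theorem \cite[Theorem~5]{ad} to the nonnegative potential $M-\hat f$ then yields $E[T^\ast]\le M-\hat f(X_0)\le g_B(i^\ast)=O(n^2)$, and adding the $O(n^2)$ spent reaching feasibility finishes the proof.

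The step I expect to be the main obstacle is checking that capping $f$ at $M$ preserves the drift lower bound of $1$ that Theorem~\ref{thm:ub} establishes at every state with $LO(X_t)<i^\ast$. That bound credits only (i) a reduction in the number of $1$s when $|X_t|_1=B$, which keeps \LO fixed at a value $<i^\ast$ and therefore lands at potential $g_{<B}(LO(X_t))<M$, and (ii) a flip of the leftmost $0$ when $|X_t|_1<B$, against one negative term from jumping to $B$ ones at unchanged \LO (also below $M$). I would verify that the only credited transition that can reach potential $\ge M$ is the leftmost-$0$ flip carrying \LO from $i^\ast-1$ to $i^\ast$; but the analysis of Theorem~\ref{thm:ub} already values that landing state at exactly $g_B(i^\ast)=M$, so the cap subtracts nothing from the credited gain, while every \emph{uncredited} overshoot (flipping the leftmost $0$ together with further bits) still moves to potential $M\ge \hat f(X_t)$ and hence contributes nonnegatively after capping. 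Consequently $E[\hat f(X_{t+1})-\hat f(X_t)\mid X_t]\ge 1$ survives the truncation, legitimising the early-stopping drift bound.
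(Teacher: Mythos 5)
Your proposal is correct and follows essentially the same route as the paper's own proof: both reuse the potential function and drift bound from Theorem~\ref{thm:ub}, stop the argument at level $B-c(n-B)$, and exploit that the harmonic sum then only runs over indices at least $c(n-B)$, so the $(n-B)$ factor cancels and the potential at that level is $O(n^2)$ (you bound the tail by $i/(B-i+1)$ where the paper uses $1+\ln(B/(c(n-B)))$ together with $\ln y\le y$). Your explicit truncation $\hat f=\min(f,M)$ and the check that the drift of $1$ survives capping is a welcome extra degree of rigor that the paper leaves implicit when it asserts $E[T_1]\le g_{<B}(k)$.
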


\begin{proof}
Let $x_t$ be the search point at the $t^{th}$ iteration of the \ooea  optimizing \LO with the cardinality constraint $B$ and $T_1 = \min\{t \geq 0 \bigm\vert LO(x_t) \geq B - c(n - B)\}$. And let $k = B - c(n - B)$. Then, from the proof of Theorem \ref{thm:ub} we know that
\begin{align*}
E[T_1] &\leq g_{<B}(k) \leq \frac{enB}{2} + (en + e^2 n(n-B))\left(1 + \ln(\frac{B}{c(n - B)})\right).\\
\end{align*}
% Since $B < \frac{4n}{5}$, we have $n - B > \frac{n}{5}$ and $\frac{2B}{n - B} \leq 8$. Therefore,
Since $\ln{y} \leq y$ for any $y > 0$, we have $(n - B)\left(\ln(\frac{B}{c(n - B)})\right) \leq \frac{B}{c} = O(n)$. Therefore, $E[T_1] = O(n^2).$
\end{proof}

\begin{cor} \label{cor:cor1_appendix}
% Let $n, B \in \natnum$ and $B < \frac{4n}{5}$. Then the \ooea optimizing \LO with the cardinality constraint $B$ finds a search point with $B - \frac{n - B}{2}$ number of \LO within $O(n^2)$ in expectation. 
Let $n, B \in \natnum$ and $c > 0$. Then the \ooea on \LO with the cardinality constraint $B$ finds a search point with $B - c(n - B)$ number of \LO within $O(n^2)$ in expectation.
\end{cor}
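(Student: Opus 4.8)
The plan is to observe first that the number of leading $1$s of a search point is by definition its \LO-value, so this statement coincides with Corollary~\ref{cor:cor1-appendix}; nonetheless I would give the argument directly from the drift analysis already established for Theorem~\ref{thm:ub}. The key observation is that the potential function $g_{<B}$ (together with $g_B$) built in that proof has expected one-step drift at least $1$ at every non-optimal state, and this drift bound holds verbatim for every state strictly below the target level, not only on the way to the true optimum.

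Concretely, I would set $k := B - c(n-B)$ and define the stopping time $T_1 := \min\{t \geq 0 \mid LO(x_t) \geq k\}$. Since the per-step drift of the potential is at least $1$ until \LO reaches $k$, the additive drift theorem \cite[Theorem~5]{ad} yields $E[T_1] \leq g_{<B}(k)$, the potential value assigned to the target level (the process starts from nonnegative potential after reaching feasibility, which only helps). It then remains a purely computational matter to bound $g_{<B}(k)$.

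For that bound I would telescope $g_{<B}(k)$ exactly as in the proof of Theorem~\ref{thm:ub}, collecting three families of terms: $\sum_{i=0}^{k} en/(B-i)$, a linear contribution of order $enk \leq enB$, and the critical sum $e^2 n(n-B)\sum_{i=1}^{k} 1/(B-i+1)$. The last sum is the only delicate one: it equals $e^2 n(n-B)$ times a partial harmonic sum running only down to index $B-k+1 = c(n-B)+1$, and is therefore at most $e^2 n(n-B)\bigl(1 + \ln(B/(c(n-B)))\bigr)$.

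The main obstacle, and indeed the entire point of the corollary, is precisely this term, which for general $B$ can be as large as $\Theta(n(n-B)\log B) = \omega(n^2)$ (cf.\ Corollary~\ref{cor:RuntimeOnConstrainedLO}). The saving comes from stopping early at $k = B - c(n-B)$: the harmonic sum is truncated at $c(n-B)$ rather than at $1$, producing the logarithm $\ln(B/(c(n-B)))$ instead of $\ln B$. I would then apply $\ln y \leq y$ with $y = B/(c(n-B))$ to obtain $(n-B)\ln(B/(c(n-B))) \leq B/c = O(n)$, which collapses the critical term to $O(n^2)$. Since the remaining terms are manifestly $O(n^2)$, we conclude $E[T_1] = O(n^2)$.
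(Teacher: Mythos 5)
Your proposal is correct and follows essentially the same route as the paper's own proof: reuse the potential functions $g_{<B}, g_B$ and the drift bound of at least $1$ from Theorem~\ref{thm:ub}, apply the additive drift theorem to the stopping time at level $k = B - c(n-B)$ to get $E[T_1] \le g_{<B}(k)$, and then observe that truncating the harmonic sum at $c(n-B)$ yields the factor $\ln(B/(c(n-B)))$, which the inequality $\ln y \le y$ collapses to $O(n)$ after multiplication by $n-B$. The only (harmless) deviations are cosmetic: you define the stopping time with $LO(x_t) \ge k$ rather than $= k$, which is if anything the more careful choice.
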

\begin{proof}
Let $x_t$ be the search point at the $t^{th}$ iteration of the \ooea  optimizing \LO with the cardinality constraint $B$ and $T_1 = \min\{t \geq 0 \bigm\vert LO(x_t) = B - c(n - B)\}$. And let $k = B - c(n - B)$. Then from Theorem \ref{thm:ub} we know that,
\begin{align*}
E[T_1] &\leq g_{<B}(k) \\
    &= \mathlarger{\mathlarger{\sum}}_{i=0}^{k} (g_{<B}(i) - g_{B}(i)) + \mathlarger{\mathlarger{\sum}}_{i=1}^{k} ( g_{B}(i) - g_{<B}(i - 1))\\
    &= \mathlarger{\mathlarger{\sum}}_{i=0}^{B - c(n - B)} \frac{en}{B - i} + \mathlarger{\mathlarger{\sum}}_{i=1}^{B - c(n - B)} en\left(1 + \frac{e\cdot(n-B)}{B - i + 1}\right)\\
    &\leq  en\mathlarger{\mathlarger{\sum}}_{i=c(n - B)}^{B} \left(\frac{1}{i}\right) + \frac{enB}{2} + e^2\cdot n(n-B) \mathlarger{\mathlarger{\sum}}_{i=c(n - B)}^{B} \left(\frac{1}{i}\right)\\
    % &\leq en(\log B + 1 - \log(B/2)) + \frac{enB}{2} \\
    % &\qquad + e^2\cdot n(n-B)(\log B + 1 - \log(B/2))\\
    &\leq \frac{enB}{2} + (en + e^2 n(n-B))\left(1 + \ln(\frac{B}{c(n - B)})\right).\\
\end{align*}
% Since $B < \frac{4n}{5}$, we have $n - B > \frac{n}{5}$ and $\frac{2B}{n - B} \leq 8$. Therefore,
Since $\ln{y} \leq y$ for any $y > 0$, we have $(n - B)\left(\ln(\frac{B}{c(n - B)})\right) \leq \frac{B}{c} \leq n$. Therefore, $E[T_1] = O(n^2).$
\end{proof}

\begin{lem} \label{lem:erfc_appendix}
Let $k \in \{1, \cdots, B-1\}$, $x \in \{0,1\}^n, B \in [n]$, $W_x = \sum_{i = 1}^{n} x_i \cdot Y_i$ where $Y_i \sim N(1, \sigma^2)$ and $x_i$ be the $i-$th bit of $x$ and $|x|_1 \leq B - k$. Then $Pr(W_x > B) \leq \frac{1}{\sqrt{\pi}}e^{\frac{-k^2}{2n^2\sigma^2}}$.
\end{lem}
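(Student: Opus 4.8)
The plan is to recognize that $W_x$ is a Gaussian random variable and to reduce the claim to a standard tail estimate for the complementary error function. Write $m = |x|_1$ for the number of one-bits of $x$. Since every term with $x_i = 0$ vanishes, $W_x$ is a sum of exactly $m$ independent copies of $N(1,\sigma^2)$, so $W_x \sim N(m, m\sigma^2)$ with mean $m$ and standard deviation $\sigma\sqrt{m}$. Note also that the hypotheses $k \ge 1$ and $m \le B-k$ force $m < B$, so the relevant tail threshold will be strictly positive.

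First I would standardize the probability. Writing $Z \sim N(0,1)$, we have $Pr(W_x > B) = Pr\!\left(Z > \frac{B-m}{\sigma\sqrt{m}}\right) = \tfrac{1}{2}\erfc\!\left(\frac{B-m}{\sqrt{2}\,\sigma\sqrt{m}}\right)$, using the identity $Pr(Z > t) = \tfrac{1}{2}\erfc(t/\sqrt{2})$. Since $\erfc$ is decreasing, it then suffices to lower bound its argument $z := \frac{B-m}{\sqrt{2}\,\sigma\sqrt{m}} > 0$. From $m \le B - k$ we get $B - m \ge k$, and since $m \le n$ the (deliberately loose) estimate $\sqrt{m} \le n$ holds; combining these gives $z \ge \frac{k}{\sqrt{2}\,\sigma n}$, hence $z^2 \ge \frac{k^2}{2n^2\sigma^2}$.

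The remaining step is a Gaussian tail estimate. The key inequality is $\erfc(z) \le \frac{2}{\sqrt{\pi}}\,e^{-z^2}$ for all $z \ge 0$, which follows from $\int_z^\infty e^{-t^2}\,dt \le e^{-z^2}$; the difference of the two sides vanishes nowhere in a harmful way, as one checks from the values at $z = 0$ and $z \to \infty$ together with the sign of the derivative $(1 - 2z)e^{-z^2}$. Combining this with the monotonicity bound above yields $Pr(W_x > B) = \tfrac{1}{2}\erfc(z) \le \frac{1}{\sqrt{\pi}}\,e^{-z^2} \le \frac{1}{\sqrt{\pi}}\,e^{-k^2/(2n^2\sigma^2)}$, which is exactly the claim.

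I do not expect a genuine obstacle here; the difficulties are purely in the bookkeeping of constants. The factor $\frac{1}{\sqrt{\pi}}$ arises precisely as $\tfrac{1}{2}\cdot\frac{2}{\sqrt{\pi}}$ from the $\erfc$ bound, and the $n^2$ in the exponent (rather than the sharper $n$ one would obtain from $\sqrt{m}\le\sqrt{n}$) is merely an artefact of the crude estimate $\sqrt{m}\le n$; the weaker stated inequality is all the paper needs. The one point worth flagging is the tail estimate $\erfc(z)\le\frac{2}{\sqrt{\pi}}e^{-z^2}$, which carries the whole argument and should therefore be either justified in a line or cited explicitly.
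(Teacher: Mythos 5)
Your proof is correct and follows essentially the same route as the paper's: identify $W_x \sim N(|x|_1, |x|_1\sigma^2)$, write the tail probability as $\tfrac{1}{2}\erfc(\cdot)$, lower-bound the argument by $\tfrac{k}{n\sigma\sqrt{2}}$ using monotonicity of $\erfc$, and finish with a Gaussian tail estimate yielding the factor $\tfrac{1}{\sqrt{\pi}}$. The only (cosmetic) difference is the tail bound itself: the paper cites the sharper inequality $\erfc(r)\le \frac{2}{\sqrt{\pi}}\,\frac{e^{-r^2}}{r+\sqrt{r^2+4/\pi}}$ and then discards the denominator (which is at least $1$), whereas you use the elementary bound $\erfc(z)\le\frac{2}{\sqrt{\pi}}e^{-z^2}$ with a self-contained calculus justification --- both give exactly the stated constant.
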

\begin{proof}
First note that $W_x$ is nothing but sum of $|x|_1$ normal random variables with mean $1$ and variance $\sigma^2$, i.e. $W_x \sim N(|x|_1, |x|_1\cdot\sigma^2)$ and $Pr(W_x > B \mid |x|_1 = B) =\frac{1}{2}$.

\begin{align*}
Pr(W_x > B) 
    &= 1 - Pr(W_x \leq B)\\
    &= 1 - \frac{1}{2}\left(1+\erf\left(\frac{B - |x|_1}{|x|_1\cdot\sigma\sqrt{2}}\right)\right)\\
    &= \frac{1}{2}\left(1 - \erf\left(\frac{B - |x|_1}{|x|_1\cdot\sigma\sqrt{2}}\right)\right)\\
    &= \frac{1}{2}\erfc\left(\frac{B - |x|_1}{|x|_1 \cdot \sigma \sqrt{2}}\right).
\end{align*}

Since complementary error function $\erfc$ is a decreasing function and $|x|_1 \leq B - k$, we have 
\begin{align*}
Pr(W_x > B) \numberthis \label{eqn:1/2}
    &=  \frac{1}{2}\erfc\left(\frac{B - |x|_1}{|x|_1\cdot\sigma\sqrt{2}}\right)\\ 
    &\leq  \frac{1}{2}\erfc\left(\frac{k}{|x|_1\cdot\sigma\sqrt{2}}\right) \\ 
    &\leq \frac{1}{2}\erfc\left(\frac{k}{n\sigma\sqrt{2}}\right)
\end{align*}
Since $r_k = \frac{k}{n\sigma\sqrt{2}} > 0$, we can use the upper bound for the $\erfc$ from \cite{WAUB},
\begin{align*}
Pr(W_x > B)
    &\leq \frac{1}{2}\erfc\left(\frac{k}{n\sigma\sqrt{2}}\right)\\
    &\leq \frac{1}{\sqrt{\pi}}\frac{e^{-r_k^2}}{r_k +\sqrt{r_k^2 + \frac{4}{\pi}} }\\
    &\leq \frac{1}{\sqrt{\pi}}e^{\frac{-k^2}{2n^2\sigma^2}}.
\end{align*}

From Equation \ref{eqn:1/2}, we have $Pr(W_x > B \mid |x|_1 = B) = \frac{1}{2}\erfc\left(\frac{B - |x|_1}{|x|_1\cdot\sigma\sqrt{2}}\right)$ which is $\frac{1}{2}$.
\end{proof}

The following two figures are the experimental results for the constraint value $B = 85$.

\begin{figure}[htbp]
  \centering
  \begin{minipage}[b]{0.4\textwidth}
    \includegraphics[width=\textwidth]{Normal_B_85.png}
    \caption{ (10+1) EA and \ooea on \LO with $n = 100$, $B = 85$ and $N(1, 0.1)$ chance constraint for $40000$ iterations.}
    \label{fig:B=85}
  \end{minipage}
\end{figure}

\begin{figure}[htbp]
  \centering
  \begin{minipage}[b]{0.4\textwidth}
    \includegraphics[width=\textwidth]{u85.png}
    \caption{(10+1) EA and \ooea on \LO with $n = 100$, $B = 85$ and $U(B-\sqrt{3}, B+\sqrt{3})$ stochastic constraint for $40000$ iterations.}
    \label{fig:UB=85}
  \end{minipage}
\end{figure}

\end{document}